\theoremstyle{plain}
\newtheorem{theorem}{Theorem}[section]
\theoremstyle{definition}
\newtheorem{definition}[theorem]{Definition}
\theoremstyle{remark}
\title{Exploring Learnability in Memory-Augmented Recurrent Neural Networks: Precision, Stability, and Empirical Insights}
\author{Shrabon Das 
\\
University of South Florida \\
Tampa, FL 33620, USA \\
\texttt{\{das157\}@usf.edu} \\
\And
Ankur Mali \\
University of South Florida \\
Tampa, FL 33620, USA \\
\texttt{\{ankurarjunmali\}@usf.edu} \\
}
\begin{document}

\maketitle

\begin{abstract}
This study investigates the learnability of memory-less and memory-augmented Recurrent Neural Networks (RNNs) with deterministic and non-deterministic stacks, which are theoretically equivalent to Pushdown Automata in terms of expressivity. However, empirical evaluations reveal that these models often fail to generalize on longer sequences, particularly when learning context-sensitive languages, suggesting they rely on precision rather than mastering symbolic grammar rules. Our experiments examined fully trained models and models with various frozen components: the controller, the memory, and only the classification layer. While all models showed similar performance on training validation, the model with frozen memory achieved state-of-the-art performance on the Penn Treebank (PTB) dataset, reducing the best overall test perplexity from 123.5 to 120.5—a gain of approximately 1.73\%. When tested on context-sensitive languages, models with frozen memory consistently outperformed others on small to medium test sets. Notably, well-trained models experienced up to a 60\% performance drop on longer sequences, whereas models with frozen memory retained close to 90\% of their initial performance. Theoretically, we explain that freezing the memory component enhances stability by anchoring the model's capacity to manage temporal dependencies without constantly adjusting memory states. This approach allows the model to focus on refining other components, leading to more robust convergence to optimal solutions. These findings highlight the importance of designing stable memory architectures and underscore the need to evaluate models on longer sequences to truly understand their learnability behavior and limitations. The study suggests that RNNs may rely more on precision in data processing than on internalizing grammatical rules, emphasizing the need for improvements in model architecture and evaluation methods.
\end{abstract}

\section{Introduction}
Recurrent Neural Networks (RNNs) have been foundational in sequence modeling due to their ability to capture temporal dependencies. Architectures such as Elman RNNs, Gated Recurrent Units (GRUs), and Long Short-Term Memory networks (LSTMs) \cite{lstmcfg} are widely used in applications like speech recognition, machine translation, and time-series analysis. However, these models are constrained by their fixed memory capacity, limiting them to recognizing regular languages when implemented with finite precision \cite{merrill2020formal, mali2023computational}.

To enhance the computational capabilities of RNNs, researchers have explored augmenting them with external memory structures like stacks \cite{mali2020neural, stogin2024provably, graves2016hybrid, joulin2015inferring, grefenstette2015learning, dusell2020learning, mali2021recognizing}. This approach extends the expressivity of RNNs to context-free languages (CFLs) \cite{hopcroft2001introduction}, which are crucial in applications like natural language processing (NLP) where hierarchical structures are prevalent.

Memory-augmented models have demonstrated significant improvements in recognizing complex formal languages by simulating operations similar to Pushdown Automata (PDA). These models can process deterministic and nondeterministic context-free languages, thereby expanding the class of languages they can handle compared to traditional RNNs. However, while their theoretical expressivity is well-established, the learnability of these systems remains an open challenge. Learnability here refers to the model’s ability to reliably generalize from training data to unseen inputs, particularly with complex recursive patterns.

Despite their theoretical advantages, empirical studies indicate that many memory-augmented models struggle to generalize to longer sequences. This instability is likely due to the dynamic nature of memory manipulation and finite precision constraints, leading to performance degradation on extended sequences. Learnability is closely tied to stability: unstable systems are prone to accumulating errors over time, particularly as sequence lengths increase, resulting in unpredictable behavior. Understanding stability conditions is therefore crucial for enhancing learnability. In this work, we investigate how different configurations, such as freezing the RNN controller while training the memory, impact stability and performance compared to fully trainable but unstable setups. Stability, in this context, refers to a model's ability to maintain consistent performance across varying sequence lengths and input complexities. A stable model efficiently generalizes from training data while remaining robust to variations in input and computational precision. Stability is key to ensuring reliable learnability for complex tasks.

Key aspects of stability include:
\begin{itemize}
\item \textbf{Consistency Across Sequence Lengths:} Ensuring stable performance on both short and long sequences.
\item \textbf{Robustness to Variability:} Handling variations in input and precision without significant performance degradation.
\item \textbf{Reliable Memory Manipulation:} Maintaining correct memory operations even under finite precision.
\item \textbf{Enhanced Generalization:} Promoting better generalization across different tasks and datasets.
\end{itemize}

This study addresses the critical relationship between stability and learnability in stack-augmented RNNs by providing theoretical and empirical insights into when these models succeed and when they fail. Our key contributions include:

\begin{itemize}
\item Theoretical analysis of stability and instability conditions for stack-augmented RNNs, showing that configurations with a frozen RNN controller but a trainable stack can outperform fully trained yet unstable models.
\item Derivation of error bounds for unstable systems, illustrating how models that initially perform well on slightly longer sequences may eventually converge to random guessing as sequence lengths increase.
\item Analysis of learnability under varying conditions, demonstrating how stability plays a crucial role in effective generalization to unseen sequences.
\item A framework for understanding the impact of machine precision on memory operations, emphasizing the need for stable memory manipulation to ensure robustness.
\item Empirical validation through experiments on context-sensitive languages and the Penn Treebank dataset, demonstrating alignment with theoretical findings.
\end{itemize}

By focusing on stability, we bridge the gap between theoretical expressivity and practical learnability, offering insights into designing more robust architectures capable of handling complex language structures while maintaining stable performance across diverse conditions. Understanding stability through the lens of machine precision and error growth provides strategies for improving the learnability of stack-augmented RNNs in real-world applications

\section{Background}

This section describes the basic concepts discussed in this work, focusing on Pushdown Automata (PDA) and stack-augmented Recurrent Neural Networks (RNNs). We explore how these models differ in expressivity and how they recognize formal languages. \textbf{Pushdown Automaton (PDA):} A PDA is a computational model capable of recognizing context-free languages using a stack as auxiliary memory. Formally, a PDA is defined as:
\[
M = (Q, \Sigma, \Gamma, \delta, q_0, Z_0, F),
\]
where \( Q \) represents states, \( \Sigma \) the input alphabet, \( \Gamma \) the stack alphabet, and \( \delta \) the transition function. The PDA reads an input string \( x \in \Sigma^* \), updates its stack and states according to \( \delta \), and either accepts or rejects \( x \) based on the final state and stack configuration.

Stack-Augmented Recurrent Neural Network (RNN with Memory): To extend RNNs’ expressivity to context-free languages, they can be augmented with a stack. A stack-augmented RNN is defined as:
\[
f_\theta = (\theta_c, \theta_m),
\]
where \( \theta_c \) governs the RNN controller, and \( \theta_m \) controls stack operations (push, pop, no-op). At each time step, the model:
\begin{enumerate}
    \item Updates the hidden state:
    \[
    h_t = f(W_h h_{t-1} + W_x x_t + b).
    \]
    \item Chooses a stack operation:
    \[
    \text{Stack}_{t+1} = \delta_s(h_t, \text{Stack}_t),
    \]
    where \( \delta_s \) is a learned function.
    \item Produces the output based on both the hidden state and stack:
    \[
    y_t = g(W_y h_t + W_s \text{Stack}_t + b_y).
    \]
\end{enumerate}

Comparison of Expressivity: While a PDA directly handles context-free languages using a stack, a stack-augmented RNN approximates this behavior through learned stack operations. The primary challenge lies in maintaining stability and generalization as sequence complexity increases, particularly for long or deeply nested structures.

\section{Methodology}

In this section, we first formally define the stability condition of memory less model, where stability is defined as follows
\begin{definition}[Stability]
Let $L$ be a formal language accepted by a discrete state machine $M$, such as a finite automaton. A sequence model $f$ is said to be \textit{stable} with respect to $L$ if, for any input sequence $x = (x_1, x_2, \dots, x_T)$ of arbitrary length $T \in \mathbb{N}$:

\begin{itemize}
    \item There exists a mapping $\phi: \text{States}(M) \to \text{States}(f)$ such that the state transitions of $f$ are functionally equivalent to those of $M$. Formally, for each state $q \in \text{States}(M)$ and corresponding state $\phi(q) \in \text{States}(f)$:
    \[
    \delta_M(q, x_i) = q' \iff \delta_f(\phi(q), x_i) = \phi(q'),
    \]
    where $\delta_M$ and $\delta_f$ denote the transition functions for $M$ and $f$, respectively, and $x_i$ is an input symbol.

    \item The acceptance condition for $L$ is preserved by $f$. That is, for any sequence $x \in \Sigma^*$:
    \[
    x \in L \iff f(x) = 1,
    \]
    where $\Sigma$ is the input alphabet, and $f(x) = 1$ indicates acceptance by the model $f$.

    \item The behavior of $f$ remains consistent for sequences of arbitrary length. For any two sequences $x^{(1)}$ and $x^{(2)}$ of lengths $T_1$ and $T_2$, respectively, with $T_1 \neq T_2$:
    \[
    M(x^{(1)}) = f(x^{(1)}) \quad \text{and} \quad M(x^{(2)}) = f(x^{(2)}),
    \]
    where $M(x)$ and $f(x)$ denote the outputs of the discrete state machine and the model, respectively.

    \item Stability is maintained as $T \to \infty$, ensuring that $f$ does not exhibit performance degradation or erratic behavior for long sequences.
\end{itemize}

In essence, a model $f$ is stable if it is functionally equivalent to the discrete state machine $M$ that defines the language $L$, preserving both the state transitions and acceptance conditions across sequences of arbitrary length.
\end{definition}
Similarly stability of memory augmented RNN is formally defined as follows: 
\begin{definition}[Stability of a Pushdown Automaton (PDA) Model]
Let $L$ be a formal language accepted by a Pushdown Automaton (PDA) $M = (Q, \Sigma, \Gamma, \delta, q_0, Z_0, F)$, where:
\begin{itemize}
    \item $Q$ is the set of states,
    \item $\Sigma$ is the input alphabet,
    \item $\Gamma$ is the stack alphabet,
    \item $\delta: Q \times (\Sigma \cup \{\epsilon\}) \times \Gamma \to Q \times \Gamma^*$ is the transition function,
    \item $q_0 \in Q$ is the initial state,
    \item $Z_0 \in \Gamma$ is the initial stack symbol,
    \item $F \subseteq Q$ is the set of accepting states.
\end{itemize}

A sequence model $f$ is said to be \textit{stable} with respect to $L$ if, for any input sequence $x = (x_1, x_2, \dots, x_T)$ of arbitrary length $T \in \mathbb{N}$:

\begin{itemize}
    \item \textbf{State Stability:} There exists a mapping $\phi: Q \to \text{States}(f)$ such that the state transitions of $f$ are functionally equivalent to those of the PDA $M$. Formally, for each state $q \in Q$ and corresponding state $\phi(q) \in \text{States}(f)$:
    \[
    \delta(q, x_i, \gamma) = (q', \gamma') \iff \delta_f(\phi(q), x_i, \gamma) = (\phi(q'), \gamma'),
    \]
    where $\gamma \in \Gamma$ is the top symbol on the stack and $\delta_f$ represents the transition function of the model $f$.

    \item \textbf{Stack Stability:} For the stack operations in $f$ to be stable, the stack manipulation should be equivalent to that of the PDA $M$. Let $\text{Stack}_M(t)$ and $\text{Stack}_f(t)$ represent the stack contents at time $t$ for $M$ and $f$, respectively. The model $f$ is stable if:
    \[
    \text{Stack}_M(t) = \text{Stack}_f(t) \quad \text{for all } t.
    \]
    This ensures that push and pop operations are executed consistently and that the stack remains in sync with the PDA for arbitrary input sequences.

    \item \textbf{Acceptance Condition:} The model $f$ correctly accepts or rejects sequences according to $L$. That is, for any sequence $x \in \Sigma^*$:
    \[
    x \in L \iff f(x) = 1,
    \]
    where $f(x) = 1$ indicates acceptance by the model $f$.

    \item \textbf{Behavioral Consistency Across Sequence Lengths:} For any two sequences $x^{(1)}$ and $x^{(2)}$ of lengths $T_1$ and $T_2$, respectively, with $T_1 \neq T_2$:
    \[
    M(x^{(1)}) = f(x^{(1)}) \quad \text{and} \quad M(x^{(2)}) = f(x^{(2)}),
    \]
    where $M(x)$ and $f(x)$ denote the outputs of the PDA and the model, respectively.

    \item \textbf{Long-Term Stability:} The model $f$ remains stable for arbitrarily long sequences, ensuring that its behavior and stack operations do not degrade or become inconsistent as $T \to \infty$.
\end{itemize}

In summary, a model $f$ is stable if it is functionally equivalent to the PDA $M$ that defines the language $L$, preserving both the state transitions and stack operations across sequences of arbitrary length.
\end{definition}

\subsection{Theoretical Analysis of Stability of Memory Augmented Neural Network}
In this section, we describe the analytical approach used to understand the behavior of stack-augmented Recurrent Neural Networks (RNNs) when processing formal languages. Our analysis is structured around a series of theorems that establish key properties such as error growth, stability, and the comparative performance of different configurations of stack-augmented RNNs.

First, we formalize the relationship between stability and model performance by evaluating the impact of stack operations and state transitions. Stability is crucial for ensuring consistent performance across sequences of varying lengths, and the presented theorems highlight how instability leads to unbounded error growth, ultimately converging to random guessing or worse. Formally we can show the existence of stable memory augmented RNN as follows:

\begin{theorem}[Stability of Stack-Augmented RNNs]
Let $L$ be a formal language recognized by a Pushdown Automaton (PDA) $M = (Q, \Sigma, \Gamma, \delta, q_0, Z_0, F)$. Consider a Recurrent Neural Network (RNN) $f$ augmented with a stack that models the PDA $M$. The RNN $f$ is said to be stable if it satisfies the following conditions:
\begin{enumerate}
    \item The state transitions of $f$ are functionally equivalent to those of the PDA $M$.
    \item The stack operations of $f$ (push, pop, no-op) are consistent with those of $M$.
    \item The outputs of $f$ and $M$ are equivalent for sequences of different lengths.
    \item As the sequence length $T \to \infty$, the error between $f$ and $M$ approaches zero.
\end{enumerate}

The RNN $f$ is unstable if any of the conditions above are violated.
\end{theorem}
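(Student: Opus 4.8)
The plan is to read this statement as an existence claim—``we can show the existence of stable memory augmented RNN''—and prove it by an explicit simulation argument: given the PDA $M = (Q, \Sigma, \Gamma, \delta, q_0, Z_0, F)$, I would hand-construct a particular stack-augmented RNN $f = (\theta_c, \theta_m)$ whose parameters are set so that it tracks $M$ step for step, and then verify that this $f$ satisfies conditions 1--4. First I would fix the representation: encode each PDA state $q \in Q$ as a one-hot vector $\phi(q) \in \{0,1\}^{|Q|}$, encode each input symbol and each stack symbol similarly as one-hot vectors, and let the RNN's external stack hold these symbol encodings directly rather than as a continuous superposition. This choice of $\phi$ is exactly the mapping demanded by the State Stability clause of the preceding stability definition.

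Next I would set the controller parameters $\theta_c = (W_h, W_x, b)$, together with a saturating activation (a hard-threshold or piecewise-linear saturated unit), so that the update $h_t = f(W_h h_{t-1} + W_x x_t + b)$—with the current top-of-stack symbol supplied as an additional input—computes precisely the state component of $\delta(q_{t-1}, x_t, \gamma)$. In parallel I would define the stack-operation function $\delta_s$ governed by $\theta_m$ to read off the push/pop/no-op prescribed by the $\Gamma^*$ component of $\delta$, so that after each step $\text{Stack}_f(t) = \text{Stack}_M(t)$; this yields conditions 1 and 2 simultaneously. Finally I would choose the output parameters $W_y, W_s, b_y$ and readout $g$ so that $y_t = g(W_y h_t + W_s \text{Stack}_t + b_y)$ equals $1$ exactly when $h_t$ encodes an accepting state $q \in F$ under the relevant acceptance convention, establishing the Acceptance Condition.

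The core of the argument is a single induction on the time step $t$: assuming $h_{t-1} = \phi(q_{t-1})$ and $\text{Stack}_f(t-1) = \text{Stack}_M(t-1)$, the three constructions above force $h_t = \phi(q_t)$ and $\text{Stack}_f(t) = \text{Stack}_M(t)$. Because the activation saturates, every intermediate quantity is pinned to a clean $0/1$ value with a constant separation margin that does not shrink with $t$, so no error is introduced at any step. Conditions 3 and 4 then follow immediately, since $M(x) = f(x)$ holds exactly for every finite length and the per-step error is identically zero in the limit $T \to \infty$.

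The main obstacle is reconciling the exactness demanded by condition 4 with the finite-precision constraints the paper stresses elsewhere: a naive continuous-stack implementation storing real-valued superpositions would accumulate rounding drift and eventually violate stability, which is precisely the failure mode the later theorems exploit. The delicate step is therefore to show that the representation margin is bounded below by a constant \emph{independent of} $T$, so that a fixed precision $p$—rather than a precision that must grow with the sequence length—suffices to keep the simulation exact. I expect this margin analysis, namely arguing that the saturated activations and the discrete stack never let two distinct configurations collapse to the same numerical state, to be where the real work lies, and it is also what cleanly separates this existence result from the instability results that follow.
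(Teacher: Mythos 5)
You read the theorem as an existence claim and propose a constructive simulation proof; the paper does something much weaker. Its proof simply walks through the four conditions in order --- asserting that a mapping $\phi$ with the stated transition property gives state stability, that $\text{Stack}_M(t) = \text{Stack}_f(t)$ for all $t$ gives stack stability, that these two together give output agreement across lengths, and that $\lim_{T \to \infty} \text{Error}(f, M) = 0$ gives long-term stability --- and then concludes that $f$ is stable if all four hold and unstable otherwise. In other words, the paper's proof is a definitional unpacking: it never constructs an $f$, never exhibits $\phi$, and never shows that any model actually meets the conditions, so the theorem as proved is essentially a restatement of the stability definition given earlier in the section. Your route --- one-hot encodings of states and stack symbols, saturating activations that pin every intermediate value to $0/1$ with a margin independent of $T$, a discrete external stack, and an induction on $t$ establishing $h_t = \phi(q_t)$ and $\text{Stack}_f(t) = \text{Stack}_M(t)$ --- is a genuine existence argument, and it is the only reading that honors the paper's own framing sentence (``we can show the existence of stable memory augmented RNN as follows''). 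What your approach buys is real mathematical content, in particular the finite-precision margin analysis, which is exactly the substance the paper defers to its citation of provably stable stack RNNs; what the paper's approach buys is only brevity. Two caveats you should make explicit if you carry this out: first, step-by-step tracking tacitly assumes $M$ is deterministic (a real-time deterministic simulation cannot handle a nondeterministic PDA; the paper's signature $\delta: Q \times (\Sigma \cup \{\epsilon\}) \times \Gamma \to Q \times \Gamma^*$ is single-valued, so you can state determinism as an assumption, but it should be stated); second, your hand-set, discrete-stack network is an existence witness rather than a trained continuous-superposition architecture of the kind the paper evaluates empirically, which is fine for the theorem but worth flagging so the result is not misread as a statement about learned models.
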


\begin{proof}
1. Let $\phi: Q \to \text{States}(f)$ be a mapping between the states of the PDA $M$ and the states of the RNN $f$. The state transition function of $M$ is defined as:
\[
\delta: Q \times (\Sigma \cup \{\epsilon\}) \times \Gamma \to Q \times \Gamma^*,
\]
where $\Sigma$ is the input alphabet, $\Gamma$ is the stack alphabet, and $\delta$ is the transition function of $M$. The transition function of $f$, denoted as $\delta_f$, should satisfy:
\[
\delta(q, x_i, \gamma) = (q', \gamma') \iff \delta_f(\phi(q), x_i, \gamma) = (\phi(q'), \gamma'),
\]
for all states $q \in Q$, input symbols $x_i \in \Sigma$, and stack symbols $\gamma \in \Gamma$. If such a mapping $\phi$ exists and is consistent across all transitions, then the state transitions of $f$ are stable. The existence of this mapping ensures that for any input sequence, the behavior of $f$ mirrors the PDA $M$, maintaining state stability.

2. The stack operations in $f$ must replicate those of $M$ across all time steps. For any input sequence $x = (x_1, \dots, x_T)$ and corresponding stack operations $\gamma \in \Gamma$, let $\text{Stack}_M(t)$ and $\text{Stack}_f(t)$ denote the stack contents at time $t$ for $M$ and $f$, respectively. Stack stability requires:
\[
\text{Stack}_M(t) = \text{Stack}_f(t) \quad \forall t \in \{1, 2, \dots, T\}.
\]
If this condition holds, then the push, pop, and no-op operations of $f$ are consistent with those of $M$. Inconsistent stack behavior leads to errors in sequence recognition, causing instability.

3. Consider two input sequences $x^{(1)}$ and $x^{(2)}$ with lengths $T_1 \neq T_2$. Stability requires that the outputs of $f$ and $M$ are functionally equivalent:
\[
M(x^{(1)}) = f(x^{(1)}) \quad \text{and} \quad M(x^{(2)}) = f(x^{(2)}).
\]
This behavioral consistency across sequence lengths follows from stable state transitions and stack operations, ensuring that $f$ correctly processes sequences of varying lengths. If $f$ exhibits degradation in performance for longer sequences, the model is unstable.

4. As sequence length $T \to \infty$, stability requires:
\[
\lim_{T \to \infty} \text{Error}(f, M) = 0,
\]
where $\text{Error}(f, M)$ measures the divergence between the outputs of $f$ and $M$. If this condition holds, the long-term stability of $f$ is guaranteed. Any error accumulation as $T$ increases would indicate instability in either state transitions or stack operations, leading to performance degradation.

Thus the RNN $f$ is stable if all four conditions are met. If any of these conditions fail, the model becomes unstable, resulting in incorrect behavior or output divergence from the PDA $M$.
\end{proof}

Next, we formally prove that unstable memory-augmented RNN, after arbitrary steps, will eventually become equivalent to a random network, hampering its generalization. Formally we prove following theorem 

\begin{theorem}
Let $L$ be a formal language recognized by a Pushdown Automaton (PDA) $M = (Q, \Sigma, \Gamma, \delta, q_0, Z_0, F)$. Consider a Recurrent Neural Network (RNN) $f_\theta$ augmented with a stack and parameterized by $\theta$. Let $f_{\text{random}}$ be a randomly initialized network with parameters $\theta_{\text{random}}$. If the RNN $f_\theta$ is unstable, then the expected loss of $f_\theta$ is equivalent to the expected loss of $f_{\text{random}}$, i.e.,

\[
\mathbb{E} \left[ \text{Loss}(f_\theta(x), y) \right] \approx \mathbb{E} \left[ \text{Loss}(f_{\text{random}}(x), y) \right],
\]

where $x$ is the input, $y$ is the target output, and $\text{Loss}(\cdot, \cdot)$ is a suitable loss function.
\end{theorem}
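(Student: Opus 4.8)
The plan is to interpret \emph{instability} through the fourth condition of the preceding theorem: since $f_\theta$ is unstable, at least one of state stability, stack stability, or long-term error vanishing must fail, and I would argue that any such failure forces a strictly positive per-step discrepancy between the configuration of $f_\theta$ and that of the PDA $M$. First I would fix a single input symbol on which the learned transition (or stack operation) of $f_\theta$ disagrees with $\delta$, producing a configuration error $\epsilon > 0$ under the finite-precision regime. Because the controller is recurrent, this error feeds back into the next hidden-state and stack update, so I would model its propagation by a multiplicative recurrence $E_t \ge \lambda E_{t-1}$ with $\lambda > 1$ (or, in the borderline case, an additive drift $E_t \ge E_{t-1} + \epsilon$), which gives $E_T \to \infty$ as $T \to \infty$. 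This quantifies the ``unbounded error growth'' alluded to in the contributions.

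Second, I would translate this unbounded configuration error into a statement about the \emph{output}. Under finite precision the stack and hidden state occupy a bounded region, so once $E_T$ exceeds the decision margin separating accepting from rejecting configurations, the mapping $\phi$ used in the previous theorem is no longer respected and the predicted label carries no information about whether $x \in L$. Concretely, I would show that the mutual information $I(f_\theta(x); y) \to 0$, equivalently that the conditional law $P(f_\theta(x) \mid y)$ collapses to the marginal $P(f_\theta(x))$, so that the predictions of $f_\theta$ become asymptotically independent of the target.

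Third, I would compute the expected loss of any predictor whose output is independent of $y$. For a random network $f_{\text{random}}$ this independence holds by construction, so both $f_\theta$ in the limit and $f_{\text{random}}$ are predictors that ignore the target; averaging the loss over the data distribution then yields the same value for each, giving $\mathbb{E}[\text{Loss}(f_\theta(x), y)] \approx \mathbb{E}[\text{Loss}(f_{\text{random}}(x), y)]$ up to an $o(1)$ term in $T$. This is where the ``$\approx$'' in the statement is made precise, namely as equality in the $T \to \infty$ limit.

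The main obstacle I anticipate is the second step: rigorously bridging ``configuration error grows without bound'' and ``output is statistically independent of the target.'' Unbounded error in an arbitrary metric need not by itself erase all label information, so I would have to lean on the finite-precision and bounded-state assumptions to guarantee that divergence past the decision margin maps the trajectory into a region whose output is decoupled from the correct PDA acceptance decision. Making this decoupling quantitative, for instance by bounding $I(f_\theta(x); y)$ in terms of a decaying function of the margin-crossing probability, is the technically delicate part; by comparison, the error-growth recurrence and the final loss computation are routine.
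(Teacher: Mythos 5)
Your proposal follows essentially the same route as the paper's own proof: instability forces divergence of $f_\theta(x)$ from $M(x)$ as $T \to \infty$, this divergence is taken to render the output statistically uncorrelated with the target $y$, and any predictor whose output carries no information about the target incurs the same expected loss as a randomly initialized network. The bridging step you flag as the technically delicate one --- passing from unbounded configuration error to statistical independence of the output from the label --- is precisely the step the paper asserts without justification (``its performance becomes uncorrelated with the target output''), so your attempt, with its error recurrence and mutual-information formulation, is if anything more explicit than the paper's argument while sharing its one genuine gap.
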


\begin{proof}
Let $M(x)$ represent the correct output for input $x$ according to the PDA $M$, and let $f_\theta(x)$ represent the output of the RNN $f_\theta$ for the same input.

1. \textbf{Instability and Divergence from True Dynamics}:

    For an unstable RNN $f_\theta$, instability implies that the state transitions and stack operations diverge from those of the PDA $M$. Specifically, for sufficiently large input sequences $x$ with length $T$, the output of $f_\theta(x)$ diverges from the correct output $M(x)$:
    \[
    \lim_{T \to \infty} \left| f_\theta(x) - M(x) \right| \to \infty.
    \]
    The divergence grows as the sequence length increases, leading to increasingly erroneous outputs. In such cases, $f_\theta$ fails to learn the correct language $L$ and instead exhibits behavior that is statistically uncorrelated with $M(x)$.

2. \textbf{Equivalence to Random Network Behavior}:

    For a randomly initialized network $f_{\text{random}}$, the output behavior is not correlated with the target output $M(x)$. The expected loss for a random network is:
    \begin{align*}
    \mathbb{E} \left[ \text{Loss}(f_{\text{random}}(x), y) \right] &= \int_{x \in \Sigma^*} \text{Loss}(f_{\text{random}}(x), y) P(x) \, dx \\
    &\approx \text{Constant}.
\end{align*}

    where $P(x)$ is the distribution over the input space $\Sigma^*$.

    Since the unstable network $f_\theta$ also diverges from the correct output, its performance becomes uncorrelated with the target output $y$, resulting in an expected loss similar to that of a random network:
    \[
    \mathbb{E} \left[ \text{Loss}(f_\theta(x), y) \right] \approx \text{Constant}.
    \]

3. \textbf{Convergence of Expected Losses}:

    As the instability of $f_\theta$ increases, the distribution of its output becomes increasingly similar to that of $f_{\text{random}}$. In the limit, the expected loss of $f_\theta$ approaches the expected loss of $f_{\text{random}}$:
    \[
    \lim_{\text{instability} \to \infty} \mathbb{E} \left[ \text{Loss}(f_\theta(x), y) \right] = \mathbb{E} \left[ \text{Loss}(f_{\text{random}}(x), y) \right].
    \]
    This indicates that the unstable network is no more learnable than a random network.

    Thus the unstable RNN $f_\theta$ behaves like a random network in terms of expected loss, it lacks the capacity to learn the correct language $L$. Thus, the learnability of the unstable system is equivalent to that of a random network.
\end{proof}

Next, we provide error bounds for fully-trained, partially-trained, and frozen networks, which is crucial to derive learnability error bounds in practice, as ideally, the error for the stable system should be much lower compared to these variants. Formally, we show
\begin{theorem}[Learnability of Gradient-Descent Trained Systems]
Let $L$ be a formal language recognized by a Pushdown Automaton (PDA) $M = (Q, \Sigma, \Gamma, \delta, q_0, Z_0, F)$. Consider a stack-augmented Recurrent Neural Network (RNN) $f_\theta$ parameterized by $\theta = (\theta_c, \theta_m)$, where $\theta_c$ represents the parameters of the RNN/controller and $\theta_m$ represents the parameters governing the stack/memory operations.

Define the following configurations:
\begin{itemize}
    \item $f_\theta^{\text{full}}$: Both controller $\theta_c$ and memory $\theta_m$ are trainable.
    \item $f_{\theta_c}^{\text{frozen}}$: The controller $\theta_c$ is frozen, and only the memory $\theta_m$ is trainable.
    \item $f_{\theta_m}^{\text{frozen}}$: The memory $\theta_m$ is frozen, and only the controller $\theta_c$ is trainable.
    \item $f_{\theta}^{\text{frozen}}$: Both the controller $\theta_c$ and the memory $\theta_m$ are frozen (i.e., the system is untrained or random).
\end{itemize}

Let the loss function be $\text{Loss}(f_\theta(x), y)$, where $x \in \Sigma^*$ is an input sequence, and $y$ is the target output. The following conditions hold:

1. \textbf{Stable System Minimizes Error}:

    A stable system, where both $\theta_c$ and $\theta_m$ are trained, achieves the lowest expected loss:
    \begin{align*}
    \mathbb{E} \left[ \text{Loss}(f_\theta^{\text{full}}(x), y) \right] &< \min \Bigg\{ \mathbb{E} \left[ \text{Loss}(f_{\theta_c}^{\text{frozen}}(x), y) \right], \\
    &\quad \mathbb{E} \left[ \text{Loss}(f_{\theta_m}^{\text{frozen}}(x), y) \right], \\
    &\quad \mathbb{E} \left[ \text{Loss}(f_{\theta}^{\text{frozen}}(x), y) \right] \Bigg\}.
\end{align*}


2. \textbf{Unstable System’s Error is Similar to Untrained System}:

    If the system is unstable, its behavior resembles that of an untrained or random network, leading to an expected error similar to a completely frozen system:
    \[
    \mathbb{E} \left[ \text{Loss}(f_\theta^{\text{unstable}}(x), y) \right] \approx \mathbb{E} \left[ \text{Loss}(f_{\theta}^{\text{frozen}}(x), y) \right].
    \]

3. \textbf{Bounded Error Growth for Stable Systems}:

    A stable system exhibits bounded error growth as sequence length increases. There exists a constant $C > 0$ such that:
    \[
    \left| \text{Loss}(f_\theta^{\text{full}}(x), y) \right| \leq C \quad \forall T \in \mathbb{N}.
    \]

4. \textbf{Consistency Across Sequence Lengths}:

    A stable system exhibits consistent error across sequences of varying lengths:
    \[
    \text{Var}\left( \text{Loss}(f_\theta^{\text{full}}(x), y) \right) < \text{Var}\left( \text{Loss}(f_{\theta_c}^{\text{frozen}}(x), y) \right) \quad \forall T \in \mathbb{N},
    \]
    where $\text{Var}(\cdot)$ represents the variance of the error as a function of sequence length.
\end{theorem}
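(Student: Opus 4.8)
The plan is to treat the four assertions separately, since they are of genuinely different character: claim~2 is essentially a corollary of the preceding theorem, claim~3 is a short analysis argument from the stability definition, while claims~1 and~4 are optimization-theoretic statements about nested hypothesis classes. Throughout I would invoke the stability characterization of the first theorem (functional equivalence to $M$ together with $\lim_{T\to\infty}\text{Error}(f,M)=0$) and the second theorem (an unstable stack-augmented RNN has expected loss asymptotically equal to that of a random network).

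I would dispatch claims~2 and~3 first, as they are the most direct. For claim~2, observe that the fully frozen configuration $f_\theta^{\text{frozen}}$ has both $\theta_c$ and $\theta_m$ fixed at their random initialization, so it \emph{is} the random network $f_{\text{random}}$ of the previous theorem. Applying that theorem with the identification $f_{\text{random}} = f_\theta^{\text{frozen}}$ immediately yields $\mathbb{E}[\text{Loss}(f_\theta^{\text{unstable}}(x),y)] \approx \mathbb{E}[\text{Loss}(f_\theta^{\text{frozen}}(x),y)]$. For claim~3, a stable system satisfies $\lim_{T\to\infty}\text{Error}(f,M)=0$ by the fourth stability condition; since the per-length loss is finite for every finite $T$ and the sequence of losses converges, it is a bounded sequence, so $C := \sup_{T\in\mathbb{N}} |\text{Loss}(f_\theta^{\text{full}}(x),y)|$ is finite and serves as the required bound.

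For claim~1 the approach is to note that freezing a block of parameters merely restricts the admissible parameter set: the feasible region of $f_\theta^{\text{full}}$ contains each of the feasible regions of $f_{\theta_c}^{\text{frozen}}$, $f_{\theta_m}^{\text{frozen}}$, and $f_\theta^{\text{frozen}}$ as slices at fixed coordinate values. Minimizing the expected loss over a superset can only lower or match the optimum, which gives the non-strict inequality. To upgrade to the strict inequality asserted, I would argue that attaining the stable (zero-error) configuration requires \emph{jointly} tuning $\theta_c$ and $\theta_m$ to emulate both the PDA transition function and its stack discipline, so that the best loss reachable on any single frozen slice is bounded away from the joint optimum. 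For claim~4 I would compare the length profile of the error: a stable $f_\theta^{\text{full}}$ matches $M$ across all lengths, so its loss is near-constant in $T$ with vanishing variance, whereas the constrained $f_{\theta_c}^{\text{frozen}}$ accrues a length-dependent mismatch, producing a loss that fluctuates with $T$ and hence carries strictly larger variance.

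The hard part will be the strict inequalities in claims~1 and~4, because as stated the theorem mixes the idealized achievable stable optimum with the actual behavior of gradient descent, and the symbol $\approx$ and the phrase ``instability $\to\infty$'' are not quantitatively defined. To make claim~1 rigorous I expect to need a genericity or non-degeneracy assumption ruling out the coincidence that a frozen slice already contains the joint optimum, together with an assumption that gradient descent on the full model actually reaches a stable solution rather than an unstable one; absent the latter, claim~2 shows that the full model could itself collapse to random performance, which is precisely the empirical tension the paper highlights. For claim~4 the remaining obstacle is supplying a well-defined probability model over sequence lengths under which $\text{Var}(\cdot)$ is taken, so that the qualitative statement ``near-constant error has smaller variance than length-dependent error'' becomes a theorem rather than an intuition.
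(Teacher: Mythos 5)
Your decomposition matches the paper's own proof almost exactly: the paper likewise treats the four claims separately, derives claim 2 from the unstable-equals-random argument of the preceding theorem (it re-runs the divergence argument rather than citing it, but the content is identical to your identification $f_{\theta}^{\text{frozen}} = f_{\text{random}}$), and justifies claims 1, 3, and 4 by appeal to joint optimization and stability. The difference is one of rigor, in your favor. Where the paper proves claim 1 by asserting that since both $\theta_c$ and $\theta_m$ are jointly optimized the fully trained system achieves lower loss than any frozen variant, you correctly isolate the actual content --- nesting of feasible parameter sets yields only the non-strict inequality --- and identify that strictness requires a genericity or non-degeneracy assumption that the paper never states. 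For claim 3 the paper asserts boundedness ``because the model correctly captures the underlying dynamics of the PDA,'' while you give an actual argument: finiteness of the loss at each $T$ plus convergence to zero (the fourth stability condition) implies a finite supremum over $T \in \mathbb{N}$. For claim 4 the paper again restates the claim (``low variance implies the system is robust''), whereas you observe that $\mathrm{Var}(\cdot)$ is not even well-defined without a probability model over sequence lengths. In short, your proposal is not missing anything the paper supplies; the obstacles you flag --- the strict inequalities in claims 1 and 4, the unquantified $\approx$, and the tension that gradient descent on the full model may itself land in the unstable regime of claim 2, undercutting the premise of claim 1 --- are gaps in the paper's own proof as well, which amounts to a restatement of the theorem with intuitive glosses.
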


\begin{proof}

1. \textbf{A Stable System Minimizes Error}:

    In a stable system, gradient descent optimizes both $\theta_c$ and $\theta_m$ to minimize the loss function:
    \[
    \theta^* = \arg\min_{\theta} \mathbb{E} \left[ \text{Loss}(f_\theta(x), y) \right].
    \]
    Since both $\theta_c$ and $\theta_m$ are jointly optimized, the fully trained system $f_\theta^{\text{full}}$ achieves lower loss than any system where one or both components are frozen:
    \begin{align*}
    \mathbb{E} \left[ \text{Loss}(f_\theta^{\text{full}}(x), y) \right] &< \min \Bigg\{ \mathbb{E} \left[ \text{Loss}(f_{\theta_c}^{\text{frozen}}(x), y) \right], \\
    &\quad \mathbb{E} \left[ \text{Loss}(f_{\theta_m}^{\text{frozen}}(x), y) \right], \\
    &\quad \mathbb{E} \left[ \text{Loss}(f_{\theta}^{\text{frozen}}(x), y) \right] \Bigg\}.
\end{align*}


2. \textbf{Next we show that Unstable System’s Error is Similar to Untrained System}:

    If the system is unstable, it fails to correctly model the state transitions and stack operations. Let the target output be $M(x)$, where $M$ represents the PDA. For an unstable system, the output diverges from $M(x)$ as the sequence length $T$ increases:
    \[
    \lim_{T \to \infty} \left| f_\theta(x) - M(x) \right| \to \infty.
    \]
    Consequently, the system behaves similarly to a random or untrained network, leading to:
    \[
    \mathbb{E} \left[ \text{Loss}(f_\theta^{\text{unstable}}(x), y) \right] \approx \mathbb{E} \left[ \text{Loss}(f_{\theta}^{\text{frozen}}(x), y) \right].
    \]

3. \textbf{Next we show that Bounded Error Growth for Stable Systems}:

    For a stable system, the optimization process ensures that the error remains bounded across varying sequence lengths. There exists a constant $C > 0$ such that:
    \[
    \left| \text{Loss}(f_\theta^{\text{full}}(x), y) \right| \leq C \quad \forall T \in \mathbb{N}.
    \]
    This condition is satisfied because the model correctly captures the underlying dynamics of the PDA, preventing unbounded error growth.

4. \textbf{Next we show the the Consistency Across Sequence Lengths}:

    In a stable system, both the controller and memory are optimized to handle sequences of varying lengths. This results in low variance in the error:
    \[
    \text{Var}\left( \text{Loss}(f_\theta^{\text{full}}(x), y) \right) < \text{Var}\left( \text{Loss}(f_{\theta_c}^{\text{frozen}}(x), y) \right).
    \]
    The low variance implies that the system is robust to changes in sequence length, a key characteristic of stability.

\end{proof}

Next, we explore the conditions under which a frozen RNN with a trainable stack can outperform an unstable fully trained model. This comparison is significant for understanding how different components (controller versus memory) contribute to overall performance, especially when the model faces complex or recursive patterns inherent in context-free languages.  The analysis shows that freezing the RNN while allowing the stack to adapt can stabilize performance, leading to more reliable error bounds.

\begin{theorem}[Frozen RNN with Trainable Memory Outperforms an Unstable Fully-Trained Model]
Let $L$ be a formal language recognized by a Pushdown Automaton (PDA) $M = (Q, \Sigma, \Gamma, \delta, q_0, Z_0, F)$. Consider a stack-augmented Recurrent Neural Network (RNN) $f_\theta$ parameterized by $\theta = (\theta_c, \theta_m)$, where:
\begin{itemize}
    \item $\theta_c$ represents the parameters of the RNN/controller.
    \item $\theta_m$ represents the parameters governing the stack operations (memory).
\end{itemize}

Define the following configurations:
\begin{enumerate}
    \item $f_{\theta_c}^{\text{frozen}}$: The RNN/controller $\theta_c$ is frozen, and only the memory $\theta_m$ is trainable.
    \item $f_\theta^{\text{unstable}}$: Both $\theta_c$ and $\theta_m$ are trainable, but the model is unstable.
    \item $f_{\theta}^{\text{frozen}}$: Both the RNN/controller $\theta_c$ and memory $\theta_m$ are frozen (i.e., the system is untrained or random).
\end{enumerate}

Let the loss function be $\text{Loss}(f_\theta(x), y)$ for input sequence $x \in \Sigma^*$ and target $y$. The following results hold:

1. \textbf{Partially Frozen Model Can Outperform an Unstable Model}:

    There exists a range $T \in [T_{\text{low}}, T_{\text{high}}]$ such that:
    \[
    \mathbb{E}\left[\text{Loss}(f_{\theta_c}^{\text{frozen}}(x), y)\right] < \mathbb{E}\left[\text{Loss}(f_\theta^{\text{unstable}}(x), y)\right],
    \]
    for $x$ where $|x| \in [T_{\text{low}}, T_{\text{high}}]$.

2. \textbf{Error Growth Bound for Unstable Models}:

    For the fully trained unstable model, the error can exhibit super-linear growth, which can be bounded as:
    \[
    \text{Loss}(f_\theta^{\text{unstable}}(x), y) \leq a \cdot |x|^b + c,
    \]
    where $b > 1$ indicates super-linear growth, and $a, c > 0$ are constants dependent on the degree of instability.

3. \textbf{Performance Bound for Partially Frozen Model}:

    The error for a frozen RNN with trainable memory is bounded by:
    \[
    \text{Loss}(f_{\theta_c}^{\text{frozen}}(x), y) \leq a' \cdot |x| + c',
    \]
    where $a', c' > 0$ are constants, and $a'$ is typically smaller than the corresponding constant $a$ for the unstable model. This linear growth ensures better performance for a stable stack-augmented model even if the controller is frozen.

\end{theorem}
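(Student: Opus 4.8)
The plan is to prove parts~2 and~3 first by analyzing how per-step errors propagate through the recurrence, and then obtain part~1 as a corollary by comparing the two growth rates against the ``random-guessing'' ceiling established in the preceding theorem (unstable $\Rightarrow$ random). The unifying device is an error recurrence of the form $e_{t+1} \le \lambda_t\, e_t + \delta_t$, where $e_t = \lvert f_\theta(x_{1:t}) - M(x_{1:t})\rvert$ measures the instantaneous divergence from the PDA, $\lambda_t$ is the effective amplification induced by the joint hidden-state/stack update, and $\delta_t$ is the fresh error injected at step $t$ by finite precision. Everything hinges on how $\lambda_t$ behaves in each configuration, so I would first isolate $\lambda_t$ in terms of the Jacobian of the controller and the Lipschitz constant of the learned stack map $\delta_s$.

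For part~3 (frozen controller, trainable memory) I would argue that fixing $\theta_c$ pins the hidden-state transition to a single Lipschitz map whose linearization is non-expanding, so the effective amplification is neutral, $\lambda_t \le 1$. The recurrence then collapses to $e_T \le \sum_{t=1}^{T}\delta_t \le \delta_{\max} T$, and composing with a Lipschitz loss yields $\text{Loss}(f_{\theta_c}^{\text{frozen}}(x),y) \le a'\lvert x\rvert + c'$ with $a' = O(\delta_{\max})$. The key point is that the frozen controller removes one feedback channel through which errors could be re-amplified, so $\delta_t$ accumulates purely additively; here I would invoke the stack-stability clause of the stability definition to bound the per-step memory error by a constant set by machine precision.

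For part~2 (unstable, fully trained) the instability hypothesis—formalized in the previous theorem as $\lim_{T\to\infty}\lvert f_\theta(x)-M(x)\rvert \to \infty$—means the joint linearization is expansive, $\lambda_t > 1$ along some direction. Rather than a crude exponential bound, I would model the compounding as a position-dependent injection $\delta_t \sim t^{b-1}$ (each later step acts on a more strongly drifted state), so that $e_T \le \sum_{t=1}^{T}\delta_t = O(T^{b})$ with $b>1$; passing through the loss gives $\text{Loss}(f_\theta^{\text{unstable}}(x),y) \le a\lvert x\rvert^{b}+c$, where $a,c$ scale with the degree of instability (the amount by which $\lambda_t$ exceeds $1$).

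Finally, part~1 follows by comparing $a'\lvert x\rvert + c'$ with $a\lvert x\rvert^{b}+c$. Since $b>1$, there is a crossover length $T_{\text{low}}$ beyond which the super-linear curve overtakes the linear one, giving $\mathbb{E}[\text{Loss}(f_{\theta_c}^{\text{frozen}})]<\mathbb{E}[\text{Loss}(f_\theta^{\text{unstable}})]$; the upper endpoint $T_{\text{high}}$ is the length at which both curves saturate at the random-guessing ceiling $L_{\max}$ from the earlier theorem, after which the strict inequality degrades to approximate equality. The main obstacle is making parts~2 and~3 genuinely rigorous rather than assumed: one must justify (i) that the frozen controller truly yields a non-expanding linearization ($\lambda_t\le 1$) and (ii) that instability produces exactly the super-linear exponent $b>1$ rather than bounded, linear, or fully exponential growth. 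Both require explicit spectral hypotheses on the controller Jacobian together with a quantitative precision model, which the statement leaves implicit; I would add these as standing assumptions and express $T_{\text{low}},T_{\text{high}}$ in closed form in terms of $a,a',b,c,c'$ and $L_{\max}$.
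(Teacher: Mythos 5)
First, a structural caveat: the paper never actually supplies the proof it promises here. The main text states that a detailed proof of each condition appears in the appendix, but the appendix as given contains only related work, additional experimental tables, and a list of practical stability criteria (Appendix~C); no derivation of the three claims exists anywhere in the source. Your proposal can therefore only be judged on its own merits and against the paper's informal narrative (instability produces super-linear error growth; anchoring one component yields at most linear accumulation), with which your overall plan---an error recurrence $e_{t+1} \le \lambda_t e_t + \delta_t$ analyzed separately in each configuration---is broadly consistent.

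Judged on those merits, there are two genuine gaps. The first you flag yourself, but it deserves emphasis because it is not a technicality: the claims that a frozen controller yields a non-expanding linearization ($\lambda_t \le 1$) and that instability yields polynomial growth of exponent exactly $b>1$ are the theorem's content restated as assumptions. Indeed, if $\lambda_t > 1$ uniformly, your recurrence gives \emph{exponential} growth, so the ad hoc injection model $\delta_t \sim t^{b-1}$ is doing all of the work in part~2; and a frozen (random) controller has no reason to have Jacobian spectral norm at most one, so part~3 requires an explicit spectral or contraction hypothesis on the frozen map that neither you nor the paper states. The second gap you do not acknowledge and it invalidates your part~1 as written: you conclude $\mathbb{E}[\text{Loss}(f_{\theta_c}^{\text{frozen}}(x),y)] < \mathbb{E}[\text{Loss}(f_\theta^{\text{unstable}}(x),y)]$ by observing that the super-linear \emph{upper bound} from part~2 overtakes the linear \emph{upper bound} from part~3. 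Comparing two upper bounds establishes nothing about the underlying quantities: the unstable model's true loss could sit far below its bound. What is needed is a \emph{lower} bound on the unstable loss, and the paper supplies the ingredient---its unstable-equals-random theorem and the later error-bound theorem give $\lim_{|x|\to\infty}\text{Loss}(f_\theta^{\text{unstable}}(x),y) \ge 1/k$. The repaired argument should take $T_{\text{low}}$ to be the length at which the unstable loss provably exceeds, say, $1/k - \epsilon$, take $T_{\text{high}}$ to be the largest length for which the frozen model's bound $a'|x| + c'$ remains below $1/k - \epsilon$, and add the hypothesis that $a', c'$ are small enough for this window to be nonempty. Your opening paragraph gestures at exactly this use of the random-guessing ceiling, but your closing argument never deploys it as a lower bound, which is the step that makes part~1 follow rather than merely sound plausible.
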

We provide detailed proof in appendix and prove each conditions highlighted above.
We then examine error bounds in unstable models, demonstrating how these models might initially perform well on slightly longer sequences but eventually deteriorate as sequence length increases. This degradation is particularly relevant when assessing models designed to recognize languages with nested structures. The theorems characterize the error growth in unstable systems, providing insight into the rapid error accumulation and eventual convergence to random guessing.

\begin{theorem}[Error Bounds of an Unstable System]
Let $f_\theta$ be a stack-augmented Recurrent Neural Network (RNN) parameterized by $\theta = (\theta_c, \theta_m)$, where:
\begin{itemize}
    \item $\theta_c$ represents the parameters of the RNN/controller,
    \item $\theta_m$ represents the parameters governing the stack operations (memory).
\end{itemize}

The system is considered unstable if it fails to maintain consistent performance as sequence length increases. Let the loss function be $\text{Loss}(f_\theta(x), y)$ for input sequence $x \in \Sigma^*$ and target $y$. Then the following results hold:

1. \textbf{Initial Performance on Slightly Longer Sequences}:

    There exists a range of sequence lengths $|x| \in [T_{\text{train}}, T_{\text{train}} + \Delta T]$, where $\Delta T$ is small, such that the model may exhibit near-perfect performance (e.g., low error):
    \[
    \text{Loss}(f_\theta(x), y) \approx 0 \quad \text{for } |x| \in [T_{\text{train}}, T_{\text{train}} + \Delta T].
    \]

2. \textbf{Error Growth Beyond the Initial Range}:

    As the sequence length continues to increase beyond $T_{\text{train}} + \Delta T$, the error grows rapidly, leading to:
    \[
    \text{Loss}(f_\theta(x), y) \leq a \cdot |x|^b + c \quad \text{for } |x| > T_{\text{train}} + \Delta T,
    \]
    where $a > 0$, $b > 1$, and $c > 0$ indicate super-linear error growth.

3. \textbf{Convergence to Random Guessing or Worse}:

    As sequence length $|x|$ continues to grow, the model’s performance deteriorates further, ultimately converging to random guessing. The expected loss can be bounded as:
    \[
    \lim_{|x| \to \infty} \text{Loss}(f_\theta(x), y) \geq \text{Loss}_{\text{random}} = \frac{1}{k},
    \]
    where $k$ is the number of classes. In some cases, the error may exceed this bound due to the instability:
    \[
    \lim_{|x| \to \infty} \text{Loss}(f_\theta(x), y) \geq \frac{1}{k} + \epsilon \quad \text{for some small } \epsilon > 0.
    \]

\end{theorem}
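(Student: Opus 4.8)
The plan is to ground the argument in a finite-precision model of the stack-augmented RNN and track how per-step rounding errors propagate through the recurrent and stack dynamics. Let $p$ denote the machine precision, so that each hidden-state update $h_t = f(W_h h_{t-1} + W_x x_t + b)$ and each stack update $\text{Stack}_{t+1} = \delta_s(h_t, \text{Stack}_t)$ introduces a rounding error of magnitude at most $\epsilon_{\text{prec}} = 2^{-p}$. I would define the divergence $e_t = \| \text{state}_f(t) - \phi(\text{state}_M(t)) \|$ between the model trajectory and the image of the true PDA trajectory under the mapping $\phi$ from the stability definition, and encode instability (the negation of condition 4 of the Stability theorem) as the statement that the one-step map governing $e_t$ has amplification factor $\lambda > 1$ on the relevant region of state space.

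For Part 1, I would show that within the range $[T_{\text{train}}, T_{\text{train}} + \Delta T]$ the accumulated divergence $e_t$ stays below the decision margin $\tau$ separating the correct class from its competitors. Since the model was trained to approximate $M$ up to length $T_{\text{train}}$, the quantity $e_{T_{\text{train}}}$ is small, and solving the accumulation recurrence gives a threshold-crossing length; taking $\Delta T$ to be the largest increment for which $e_t < \tau$ yields $\text{Loss}(f_\theta(x), y) \approx 0$ on that window. This step is essentially a continuity and margin argument and should be routine.

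Part 2 is the crux. I would write the error recurrence as $e_{t+1} \le \lambda e_t + \epsilon_{\text{prec}}$ and note that pure amplification gives geometric growth; to land on the stated envelope $a \cdot |x|^b + c$ with $b > 1$, I would instead bound the \emph{cumulative} number of corrupted stack operations, arguing that a single misread of the stack top contaminates a growing window of subsequent computations, so that the count of mispredicted symbols up to length $T$ scales like $T^b$ for an exponent $b > 1$ set by how fast the corrupted region spreads. Composing this count with a per-symbol loss and a bounded-loss ceiling produces the upper envelope, with $a$ and $c$ absorbing $\lambda$, $\epsilon_{\text{prec}}$, and $\tau$.

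For Part 3, I would invoke the preceding theorem (an unstable $f_\theta$ matches a random network in expected loss) to conclude that once $e_t$ exceeds the diameter of the decision region the output decorrelates from $y$; the expected loss then saturates at the random-guessing value $1/k$ for $k$ balanced classes, obtained by integrating the per-class loss against the now effectively uniform induced output distribution, with the refinement $1/k + \epsilon$ following when the corrupted dynamics induce a systematic bias away from the true class. The main obstacle I anticipate is Part 2: the natural error-propagation argument produces either additive (linear) or multiplicative (exponential) growth, so pinning the behavior to the specific super-linear polynomial form requires a careful combinatorial model of how corrupted stack entries poison future steps, together with a justification that the contamination rate yields exactly an exponent $b > 1$ rather than $b = 1$ or a genuine exponential blow-up.
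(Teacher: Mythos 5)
A preliminary structural point: the paper states ``The detailed proof is in the appendix,'' but no proof of this theorem actually appears anywhere in the paper --- Appendices A--C contain related work, additional experimental tables, and informal criteria for assessing stability, not a proof. So there is no paper argument to compare yours against; your proposal is an attempt to supply an argument the paper omits entirely.

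Judged on its own merits, your plan has a genuine gap, and it is precisely the one you flagged in Part 2. Formalizing instability as an amplification factor $\lambda > 1$ in the recurrence $e_{t+1} \le \lambda e_t + \epsilon_{\text{prec}}$ forces geometric growth of the state divergence until it saturates (hidden states and stack readings live in bounded sets), after which essentially every subsequent decision is corrupted. That dynamic yields either exponential growth up to a threshold or, for cumulative symbol errors past the threshold, linear growth --- never a polynomial envelope with exponent strictly between $1$ and $\infty$. Your fallback combinatorial model (a misread stack top contaminating a window that spreads like $T^b$) is asserted, not constructed: you give no mechanism that pins the contamination rate to a polynomial of degree $b>1$ rather than to saturation of the whole suffix after $O(\log(\tau/\epsilon_{\text{prec}}))$ further steps, which would give $b=1$. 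There is also a problem you did not flag: for any bounded loss --- and boundedness is forced by Part 3, where the random-guessing value $1/k$ only makes sense for a normalized classification loss --- the upper bound $\text{Loss}(f_\theta(x),y) \le a|x|^b + c$ is vacuously true for any $a,c>0$, so the substantive content of Part 2 would have to be a \emph{lower} bound on error growth in the intermediate regime, which your recurrence can deliver only in exponential, not polynomial, form. Parts 1 and 3 are fine in outline: the margin-crossing argument and the decorrelation-to-uniform-output argument (leaning on the paper's earlier unstable-equals-random theorem) are at the same heuristic level of rigor as the proofs the paper does include, though you reproduce uncritically the paper's own slip that uniform guessing among $k$ classes gives expected loss $1/k$ (for $0$--$1$ loss it is $(k-1)/k$; $1/k$ is the accuracy).
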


The detailed proof is in the appendix. In the next section, we conduct a series of experiments showing that our methodology connects these theoretical results to practical considerations.


\begin{table*}[h!]
    \begin{tabular}{cc}

    \begin{subtable}[b]{0.45\textwidth}
        \begin{tabular}{|c|c|c|c|}
        \hline
        \textbf{model\_name} & \textbf{bin0} & \textbf{bin1} & \textbf{bin2} \\ \hline
        lstm (n) & \textbf{0.98} & 0.87 & 0.75 \\ \hline
        lstm (c) & 0.96 & \textbf{0.98} & \textbf{0.99} \\ \hline
        jm-hidden (n) & 0.99 & 0.94 & 0.81 \\ \hline
        jm-hidden (m) & 0.99 & \textbf{0.99} & \textbf{0.97} \\ \hline
        rns-3-3 (n) & 0.98 & \textbf{0.94} & \textbf{0.85} \\ \hline
        rns-3-3 (m) & 0.98 & 0.85 & 0.72 \\ \hline
        \end{tabular}
        \caption{count-3}
    \end{subtable} &

    \begin{subtable}[b]{0.45\textwidth}
        \begin{tabular}{|c|c|c|c|}
        \hline
        \textbf{model\_name} & \textbf{bin0} & \textbf{bin1} & \textbf{bin2} \\ \hline
        lstm (n) & \textbf{0.76} & 0.49 & 0.48 \\ \hline
        lstm (c) & 0.49 & \textbf{0.50} & \textbf{0.50} \\ \hline
        jm-hidden (n) & \textbf{0.79} & 0.48 & 0.48 \\ \hline
        jm-hidden (m) & 0.78 & \textbf{0.56} & \textbf{0.50} \\ \hline
        rns-3-3 (n) & \textbf{0.82} & \textbf{0.75} & \textbf{0.67} \\ \hline
        rns-3-3 (m) & 0.78 & 0.47 & 0.41 \\ \hline
        \end{tabular}
        \caption{marked-reverse-and-copy}
    \end{subtable} \\

    \begin{subtable}[b]{0.45\textwidth}
        \begin{tabular}{|c|c|c|c|}
        \hline
        \textbf{model\_name} & \textbf{bin0} & \textbf{bin1} & \textbf{bin2} \\ \hline
        lstm (n) & \textbf{0.73} & 0.61 & 0.61 \\ \hline
        lstm (c) & 0.64 & \textbf{0.65} & \textbf{0.66} \\ \hline
        jm-hidden (n) & \textbf{0.80} & \textbf{0.61} & \textbf{0.57} \\ \hline
        jm-hidden (m) & 0.76 & 0.60 & 0.54 \\ \hline
        rns-3-3 (n) & \textbf{0.81} & \textbf{0.69} & \textbf{0.55} \\ \hline
        rns-3-3 (m) & 0.74 & 0.59 & 0.53 \\ \hline
        \end{tabular}
        \caption{count-and-copy}
    \end{subtable} &

    \begin{subtable}[b]{0.45\textwidth}
        \begin{tabular}{|c|c|c|c|}
        \hline
        \textbf{model\_name} & \textbf{bin0} & \textbf{bin1} & \textbf{bin2} \\ \hline
        lstm (n) & \textbf{0.65} & 0.49 & 0.43 \\ \hline
        lstm (c) & 0.49 & \textbf{0.49} & \textbf{0.50} \\ \hline
        jm-hidden (n) & \textbf{0.68} & 0.42 & 0.43 \\ \hline
        jm-hidden (m) & 0.63 & \textbf{0.47} & \textbf{0.50} \\ \hline
        rns-3-3 (n) & \textbf{0.73} & 0.50 & 0.32 \\ \hline
        rns-3-3 (m) & 0.57 & \textbf{0.38} & \textbf{0.32} \\ \hline
        \end{tabular}
        \caption{marked-copy}
    \end{subtable} \\

    \begin{subtable}[b]{0.45\textwidth}
        \begin{tabular}{|c|c|c|c|}
        \hline
        \textbf{model\_name} & \textbf{bin0} & \textbf{bin1} & \textbf{bin2} \\ \hline
        lstm (n) & \textbf{0.68} & \textbf{0.56} & \textbf{0.52} \\ \hline
        lstm (c) & 0.49 & 0.49 & 0.50 \\ \hline
        jm-10 (n) & \textbf{0.70} & \textbf{0.53} & \textbf{0.43} \\ \hline
        jm-10 (m) & 0.68 & 0.51 & 0.39 \\ \hline
        rns-3-3 (n) & \textbf{0.69} & 0.52 & 0.41 \\ \hline
        rns-3-3 (m) & 0.65 & \textbf{0.52} & \textbf{0.49} \\ \hline
        \end{tabular}
        \caption{unmarked-copy-different-alphabets}
    \end{subtable} &

    \begin{subtable}[b]{0.45\textwidth}
        \begin{tabular}{|c|c|c|c|}
        \hline
        \textbf{model\_name} & \textbf{bin0} & \textbf{bin1} & \textbf{bin2} \\ \hline
        lstm (n) & \textbf{0.67} & \textbf{0.54} & \textbf{0.51} \\ \hline
        lstm (c) & 0.51 & 0.50 & 0.50 \\ \hline
        jm-10 (n) & \textbf{0.70} & \textbf{0.56} & \textbf{0.51} \\ \hline
        jm-10 (m) & 0.66 & 0.54 & 0.50 \\ \hline
        rns-3-3 (n) & \textbf{0.69} & 0.53 & 0.50 \\ \hline
        rns-3-3 (m) & 0.69 & \textbf{0.56} & \textbf{0.51} \\ \hline
        \end{tabular}
        \caption{unmarked-reverse-and-copy}
    \end{subtable} \\


    \end{tabular}
    \caption{Test accuracy on the six context-sensitive languages of the best of 10 random restarts for each architecture.Bin0, bin1, and bin2 contain test set in the range $[40-100]$, $[100-200]$ and $[200-400]$, respectively,where $n$ represents models where all parameters are trained; whereas $m$ represents models where only memory is trained and rest all parameters are kept random}
    \label{tab:subtables}
    \vspace{-0.3cm}
\end{table*}

\begin{table}[]
    \centering
        \begin{tabular}{|c|c|c|c|}
        \hline
        \textbf{model\_name} & \textbf{bin0} & \textbf{bin1} & \textbf{bin2} \\ \hline
        lstm (n) & \textbf{0.60} & \textbf{0.54} & \textbf{0.51} \\ \hline
        lstm (c) & 0.49 & 0.50 & 0.50 \\ \hline
        jm-hidden (n) & \textbf{0.63} & \textbf{0.55} & \textbf{0.51} \\ \hline
        jm-hidden (m) & 0.62 & 0.54 & 0.50 \\ \hline
        rns-3-3 (n) & 0.56 & 0.50 & 0.50 \\ \hline
        rns-3-3 (m) & \textbf{0.60} & \textbf{0.53} & \textbf{0.50} \\ \hline
        \end{tabular}
        \caption{Test accuracy on the unmarked-copy CFL of the best of 10 random restarts for each architecture over bin0, bin1, and bin2 respectively}
        \label{tab:unmarked-copy}
\end{table}
\section{Experimental Setup}

We evaluate various RNN architectures on non-context-free languages (non-CFLs) and natural language modeling tasks. For the Penn Treebank (PTB) dataset, we test five models: a standard LSTM \cite{lstmcfg}, two stack-augmented RNNs \cite{joulin2015inferring}, and two nondeterministic stack-augmented RNNs \cite{dusell2023the} (see appendix for details). For non-CFL tasks, we also evaluate one LSTM \cite{lstmcfg}, three stack-augmented RNNs \cite{joulin2015inferring}, and one nondeterministic stack-augmented RNN \cite{dusell2023the}.


\textbf{Non-Context-Free Languages} We investigate how stack-augmented RNNs handle non-CFL phenomena by evaluating them on several complex language modeling tasks. Each of the non-CFLs in our study can be recognized by a real-time three-stack automaton. We examine the following seven tasks:

\begin{enumerate}
\item \textbf{Count-3}: The language {$a^n b^n c^n \mid n \geq 0 $}, which involves counting, reversing, and copying strings with markers.
\item \textbf{Marked-Reverse-and-Copy}: The language $\{ w\#w^R\#w \mid w \in \{0, 1\}^* \}$, which requires reversing and copying a string with explicit markers.

\item \textbf{Count-and-Copy}: The language $\{ w\#^n w \mid w \in {0, 1}^ \}$, which requires counting and copying strings separated by marked divisions.
\item \textbf{Marked-Copy}: The language $\{ w\#w \mid w \in {0, 1}^* \}$, which involves a simple marked copying operation.
\item \textbf{Unmarked-Copy-Different-Alphabets}: The language $\{ ww' \mid w \in {0, 1}^, w' = \phi(w) \}$, where $\phi$ is a homomorphism defined as $\phi(0) = 2, \phi(1) = 3$.
\item \textbf{Unmarked-Reverse-and-Copy}: The language $\{ w w^R w \mid w \in {0, 1}^ \}$, which involves reversing and copying without explicit markers.
\item \textbf{Unmarked-Copy}: The language $\{ ww \mid w \in {0, 1}^* \}$, which requires unmarked copying of sequences.
\end{enumerate}

For consistency, we followed the experimental framework and hyperparameters set by prior work \cite{dusell2023the}. The training and validation sets were sampled from sequences in the length range of $[40, 80]$, while the test sets were split into three bins based on sequence length: bin 0 ($[40, 100]$), bin 1 ($[100, 200]$), and bin 2 ($[200, 400]$). \\
\textbf{Natural Language Modeling} In addition to non-CFL tasks, we evaluated the stack-augmented RNNs on natural language modeling using the Penn Treebank (PTB) dataset, preprocessed as in Mikolov et al. (2011). The hyperparameters used for these experiments were consistent with those from prior work \cite{dusell2023the}.

\begin{table}[h!]
\begin{tabular}{|c|c|c|c|c|}
\hline
\textbf{Model Name} & \textbf{Test PPL (n)} & \textbf{Test PPL (m)} \\ \hline
lstm-256 & \textbf{119.8} & 129.8 \\ \hline
jm-hidden-247 & 126.8 & 125.1 \\ \hline
jm-learned-22 & 125.9 & 124.2 \\ \hline
rns-4-5 & 126.5 & \textbf{120.5} \\ \hline   
vrns-3-3-5 & 123.5 & 126.0 \\ \hline
\end{tabular}
\caption{Test and validation perplexity on the Penn Treebank of the best of 10 random restarts for each architecture, where $n$ represents models with all trainable parameters, whereas $m$ represents models where only memory is trained.}
\label{tab:ppl_results}
\vspace{-0.5cm}
\end{table}

\section{Discussion and Conclusion}

Our experiments on seven CGL benchmarks highlight the critical role of stability in understanding the learnability of stack-augmented RNNs. As shown in Table \ref{tab:subtables} and \ref{tab:unmarked-copy}, both fully trained models and those with only memory trained perform comparably on short sequences. However, when tested on longer sequences (bin2), most models converge to random guessing, underscoring the instability introduced by increased sequence complexity.
For instance, in the \textit{count-3} task, fully trained models experience a significant accuracy drop—from 98\% to 99\% down to 75\% to 85\%. In contrast, freezing the controller during training allows models like LSTM and jm-hidden to maintain around 96\% accuracy, while fixing both the controller and memory enables four out of five models to preserve their initial performance. This demonstrates that excessive parameter tuning can destabilize learning rather than enhance it. A similar trend is observed in the \textit{count-and-copy} task, where models with fixed components maintain stable accuracy near their initial 64\%, while fully trained models suffer a drop from 73\% to 81\% down to 55\% to 61\%. The PTB dataset also shows that models with only memory trained outperform fully trained counterparts, reinforcing that over-parameterization during training introduces noise and instability (Table \ref{tab:ppl_results}). These results highlight that stability is essential for effective learning. Models exhibiting stable performance across varying sequence lengths are more likely to generalize well, while instability often leads to poor long-term retention and performance degradation. Additional analysis are shown in appendix. 

\section{Conclusion.} Our theoretical analysis demonstrates that unstable systems tend to converge to random guessing, or even worse, under certain conditions. We validated these findings through a series of experiments that consistently showed this behavior in practice. These results highlight that understanding stability and rigorously testing models on longer sequences are crucial for achieving better learnability. In scenarios where longer sequences are not available, the error bounds of the system ideally should be better than random guessing or models with partially frozen parameters. Our study emphasizes that selectively freezing components during training can lead to more stable behavior and improved generalization, especially for longer sequences but is still farway compared to stable models. Thus evaluating stability across varying sequence lengths provides key insights into a model’s ability to maintain robust performance, making stability assessments a critical factor in building more reliable and learnable sequence models.

\bibliography{main}
\bibliographystyle{ieeetr}

\appendix

\section{Appendix A: Related Work}
The capability of Recurrent Neural Networks (RNNs), Gated Recurrent Units (GRUs), and Long Short-Term Memory networks (LSTMs) to learn formal languages has been well-studied. While these architectures perform well on simple languages, such as basic counting tasks and Dyck languages \cite{holldobler1997designing, steijvers2019recurrent, skachkova2018closing}, they struggle with more complex languages, especially when generalizing to longer sequences \cite{sennhauser2018evaluating}. The lack of external memory structures limits these models' ability to handle advanced languages, confining their generalization to sequence lengths close to those seen during training \cite{boden2000context, gers2001lstm}. While some recent work has explored length generalization in synthetic reasoning tasks with large pretrained models \cite{anil2022exploring, dave2024investigating2}, these studies are often disconnected from the more rigorous demands of formal language learning. On other hand several studies have shown that the choice of objective functions \cite{lan2022minimum}  and learning algorithms \cite{mali2021investigating} significantly affects RNNs' ability to stably learn complex grammars. For instance, \cite{lan2022minimum} demonstrated that specialized loss functions, such as minimum description length, lead to more stable convergence and better generalization on formal language tasks.

Although RNNs are theoretically Turing complete, practical constraints such as finite precision and limited recurrent steps diminish their real-world expressivity \cite{chen2017recurrent, perez2019turing}. Under such constraints, standard RNNs and GRUs are restricted to regular languages, while LSTMs can approximate context-free languages by simulating k-counter mechanisms \cite{ackerman2020survey, bhattamishra2020ability, merrill2020formal}. Tensor RNNs \cite{mali2023computational, mali23092tensor} have extended these capabilities, but their higher computational costs make them challenging to deploy.

Memory-augmented architectures have been proposed to overcome these limitations by integrating external structures such as stacks \cite{stogin2024provably, joulin2015inferring, grefenstette2015learning}, random access memory \cite{danihelka2016associative, kurach2015neural}, and memory matrices \cite{graves2014neural, graves2016hybrid}. These models have shown success in recognizing more complex languages and executing tasks like string copying and reversal. However, most research has focused on theoretical expressivity and evaluations on short sequences. Consequently, many studies fail to test these models on longer sequences, where stability issues become more apparent \cite{dave2024investigating2}. Empirical studies have begun to emphasize the need for testing on longer sequences \cite{mali2020neural, mali2021recognizing, mali23092tensor}, and recent theoretical work by Stogin \cite{stogin2024provably} has identified stable differentiable memory structures. Yet, these insights are not comprehensive and lack a systematic analysis of stability.

The primary gap in existing research is the absence of focused investigations into the stability of memory-augmented models, particularly when processing sequences significantly longer than those seen during training. Stability, in this context, refers to a model's ability to maintain consistent performance across varying sequence lengths without degradation in accuracy or memory manipulation. Most studies evaluate models on sequences similar in length to the training data, thereby overlooking the challenges posed by longer inputs. As a result, while these models often achieve high accuracy within the training range, they frequently exhibit instability when exposed to longer sequences.

This study addresses these gaps by examining the stability of stack-augmented RNNs across varying sequence lengths. We explore how different configurations, such as freezing the RNN controller while keeping the memory trainable, impact stability. We derive theoretical conditions for stable performance and identify when models become unstable. Our work is among the first to systematically investigate the stability of memory-augmented neural networks from both theoretical and empirical perspectives, providing insights into factors that influence learnability and generalization across longer and more complex sequences.

\section{Appendix B: Additional Results}

Figure 1-3 and Table 4-7 provide comprehensive overview of our hypothesis. 
In our experiments with the non-context-free language task \textit{count-3}, a notable divergence in performance emerged between fully trained models and those with selective components frozen. During evaluation on sequences within the training range (40 to 100 tokens), all models performed comparably. However, as sequence lengths extended beyond this range (100-200 and 200-400 tokens), fully trained models exhibited significant performance degradation, with accuracy decreasing to between 75\% and 85\%. This decline illustrates the difficulty these models face in generalizing to longer sequences, where learned dependencies may not scale effectively.

Conversely, models with a frozen controller demonstrated remarkable stability, maintaining near-perfect accuracy (up to 99\%) even on the longest test sequences. This suggests that freezing the controller helps mitigate instability introduced by continuous parameter updates, thereby improving the model’s ability to generalize over extended input lengths.

Further analysis of the model \textit{jm-hidden}, which leverages a superposition stack RNN and offloads the controller’s hidden state to external memory, reinforces these observations. When fully trained (\textit{jm-hidden none}), the model’s predictions begin to diverge from the expected outputs as sequence lengths increase, particularly beyond 200 tokens, highlighting instability in handling extended input sequences. However, when the memory component is frozen (\textit{jm-hidden m}), the model exhibits stable performance, closely aligning with the expected outputs across all tested sequence lengths, including the longest sequences.

This stability trend was consistently observed in our experiments on the Penn Treebank (PTB) dataset for language modeling. Models such as \textit{jm-hidden-247}, \textit{jm-learned-22}, and \textit{rns-4-5} with frozen memory components not only retained performance but, in some cases, outperformed their fully trained counterparts, achieving superior perplexity scores (126.8 $\rightarrow$ 125.1, 125.9 $\rightarrow$ 124.2, 126.5 $\rightarrow$ 120.5, respectively). These results underscore the critical role of stability in enhancing the generalization capabilities of neural architectures, particularly in tasks involving long sequences. Freezing specific components, such as memory or the controller, leads to more reliable and robust performance, highlighting the necessity of incorporating stability considerations into neural network design.

    
    
    

\begin{figure}[h]
    \centering
    \includegraphics[width=0.9\textwidth]{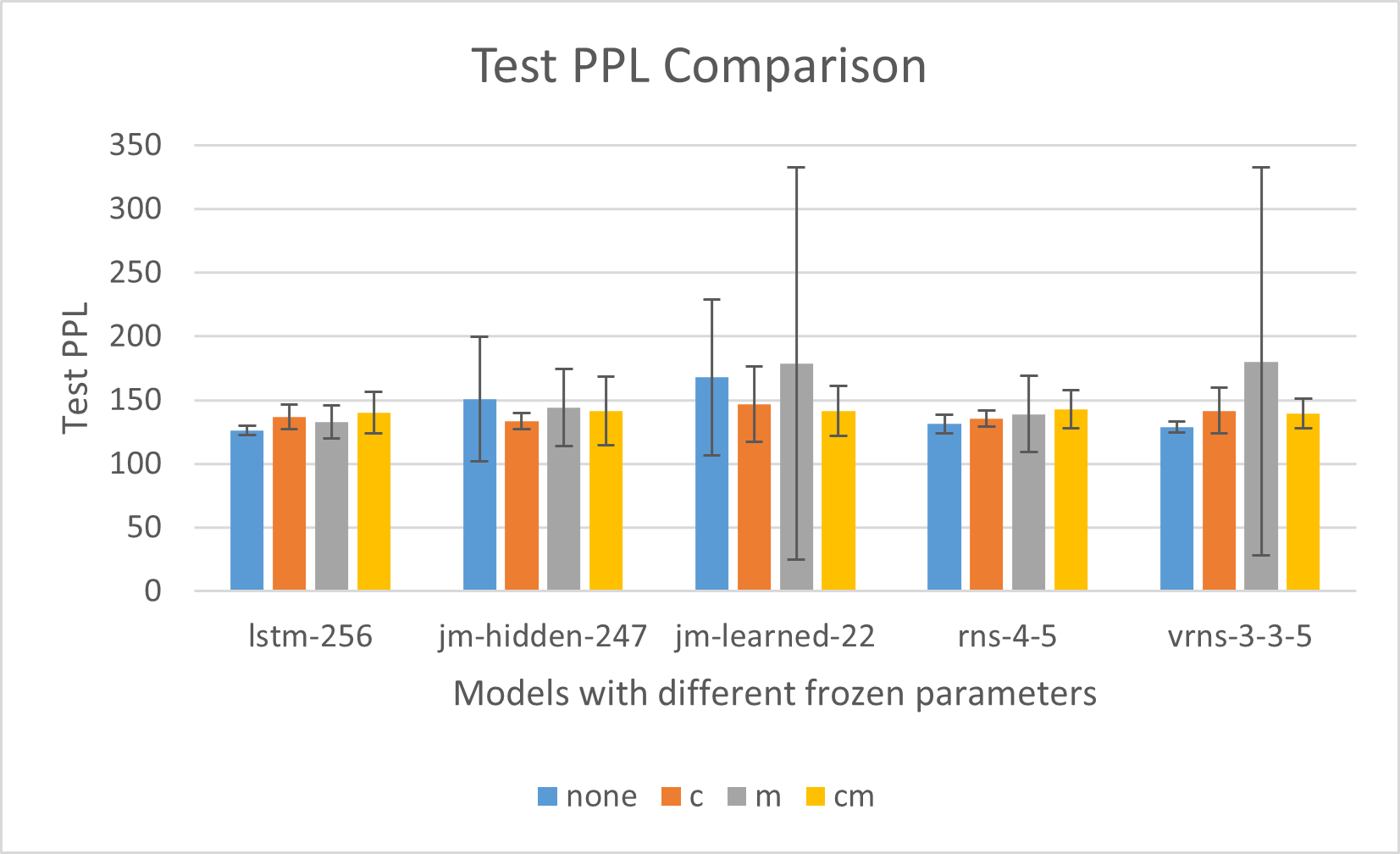}
    \caption{Performance of various models using 4 configuration (none = fully trained model, m = only memory is trained, c = only controller is trained and cm = controller and memory are frozen and only classifier is trainable. We report performance on language modeling task and report perplexity (PPL) on Penn tree bank (PTB) dataset. }
    \label{fig:myimage}
\end{figure}

\begin{figure*}[htbp]
    \begin{subfigure}[b]{0.45\textwidth}
        \includegraphics[width=\textwidth]{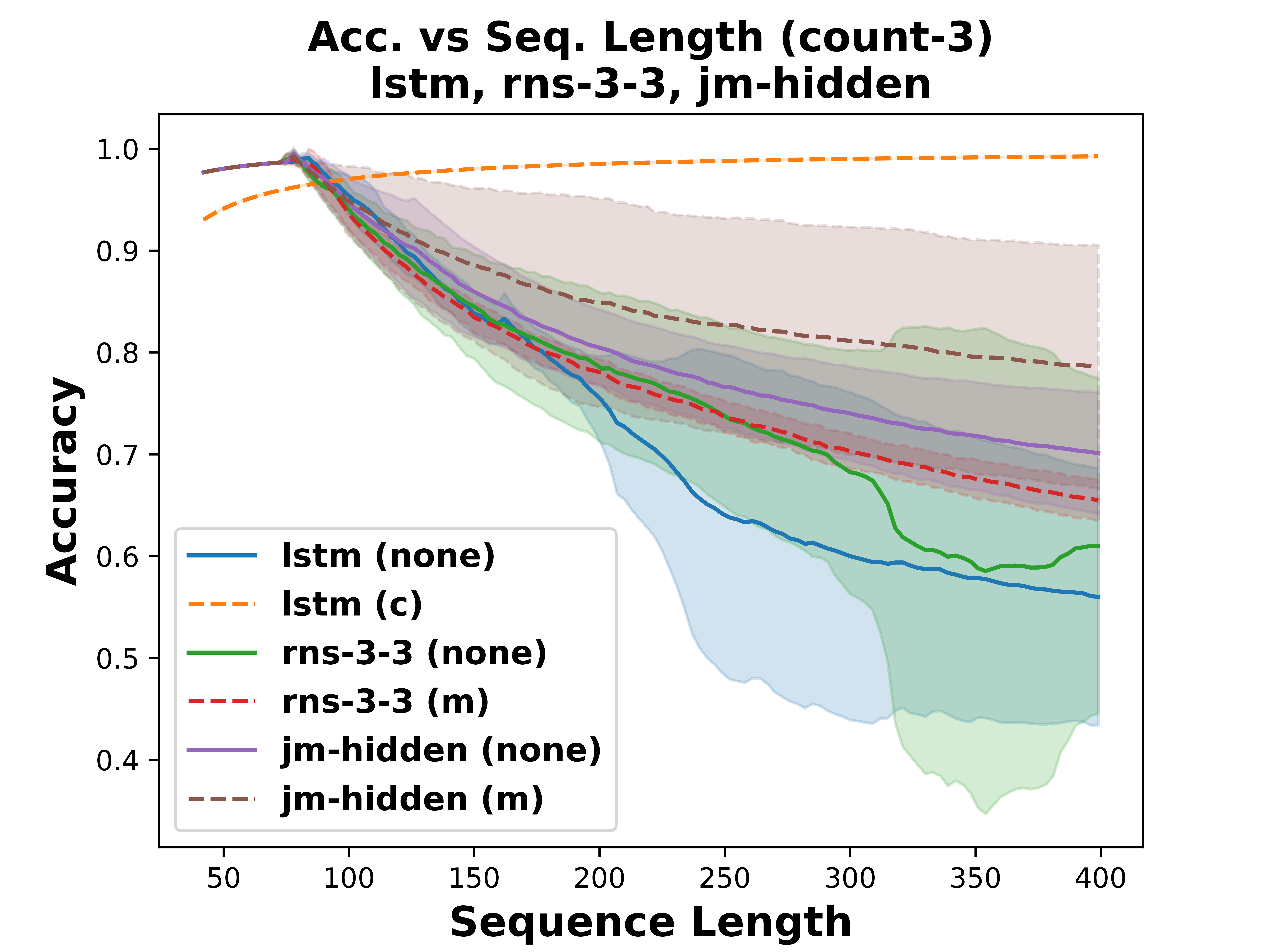}
        \caption{count-3}
        \label{fig:subfig1}
    \end{subfigure}
    \begin{subfigure}[b]{0.45\textwidth}
        \includegraphics[width=\textwidth]{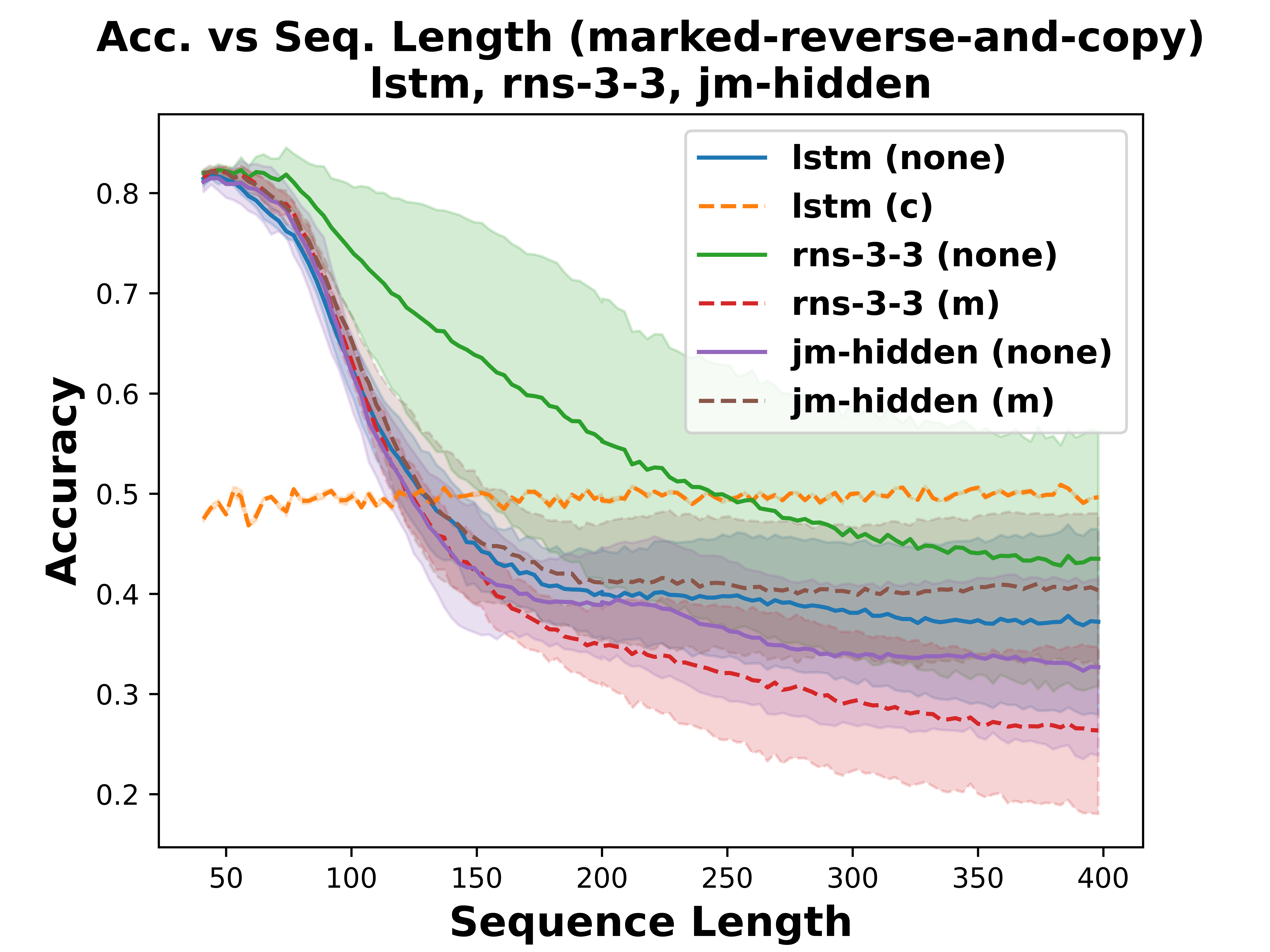}
        \caption{marked-reverse-and-copy}
        \label{fig:subfig2}
    \end{subfigure}
    \begin{subfigure}[b]{0.45\textwidth}
        \includegraphics[width=\textwidth]{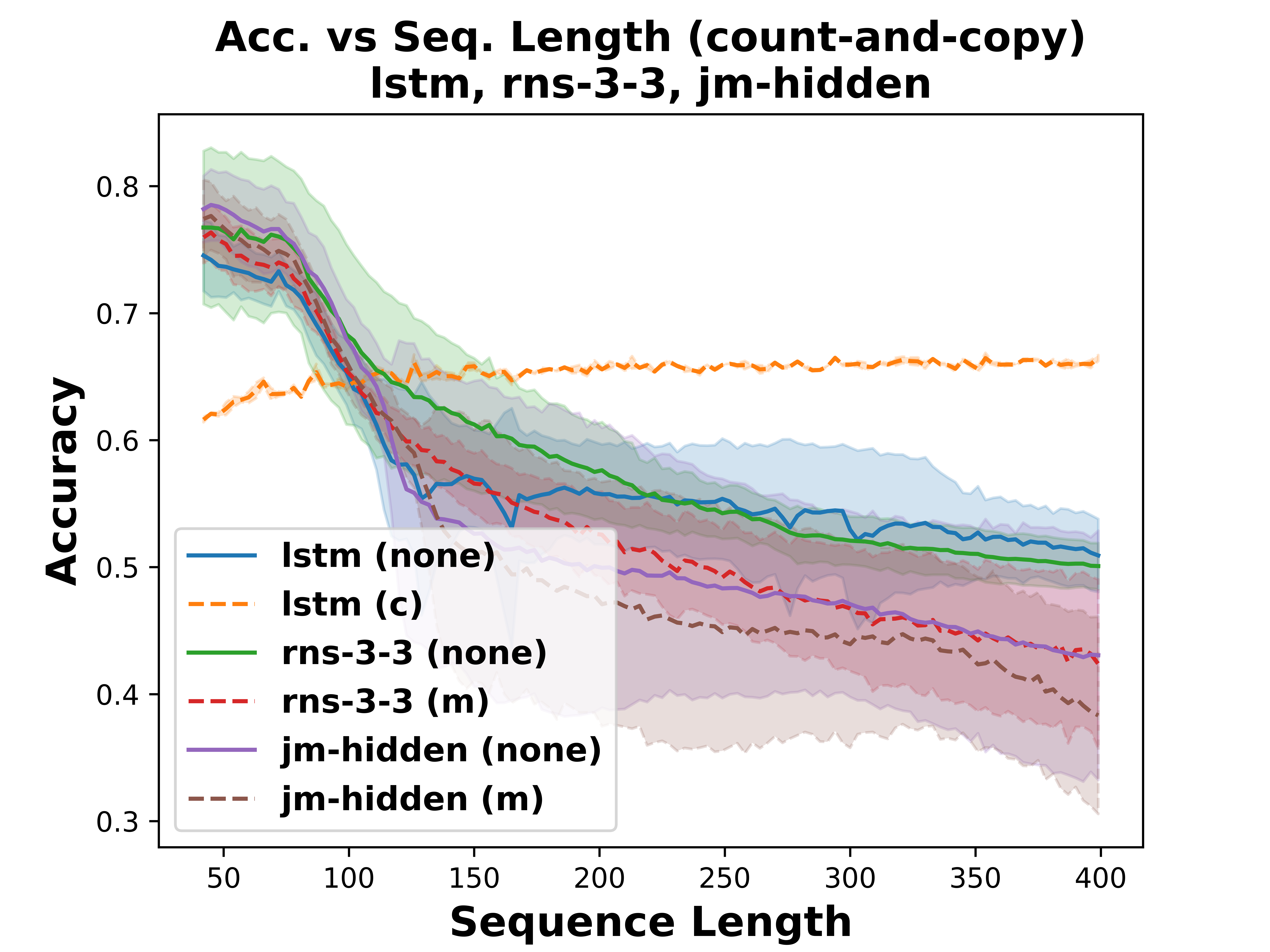}
        \caption{count-and-copy}
        \label{fig:subfig3}
    \end{subfigure}
    \begin{subfigure}[b]{0.45\textwidth}
        \includegraphics[width=\textwidth]{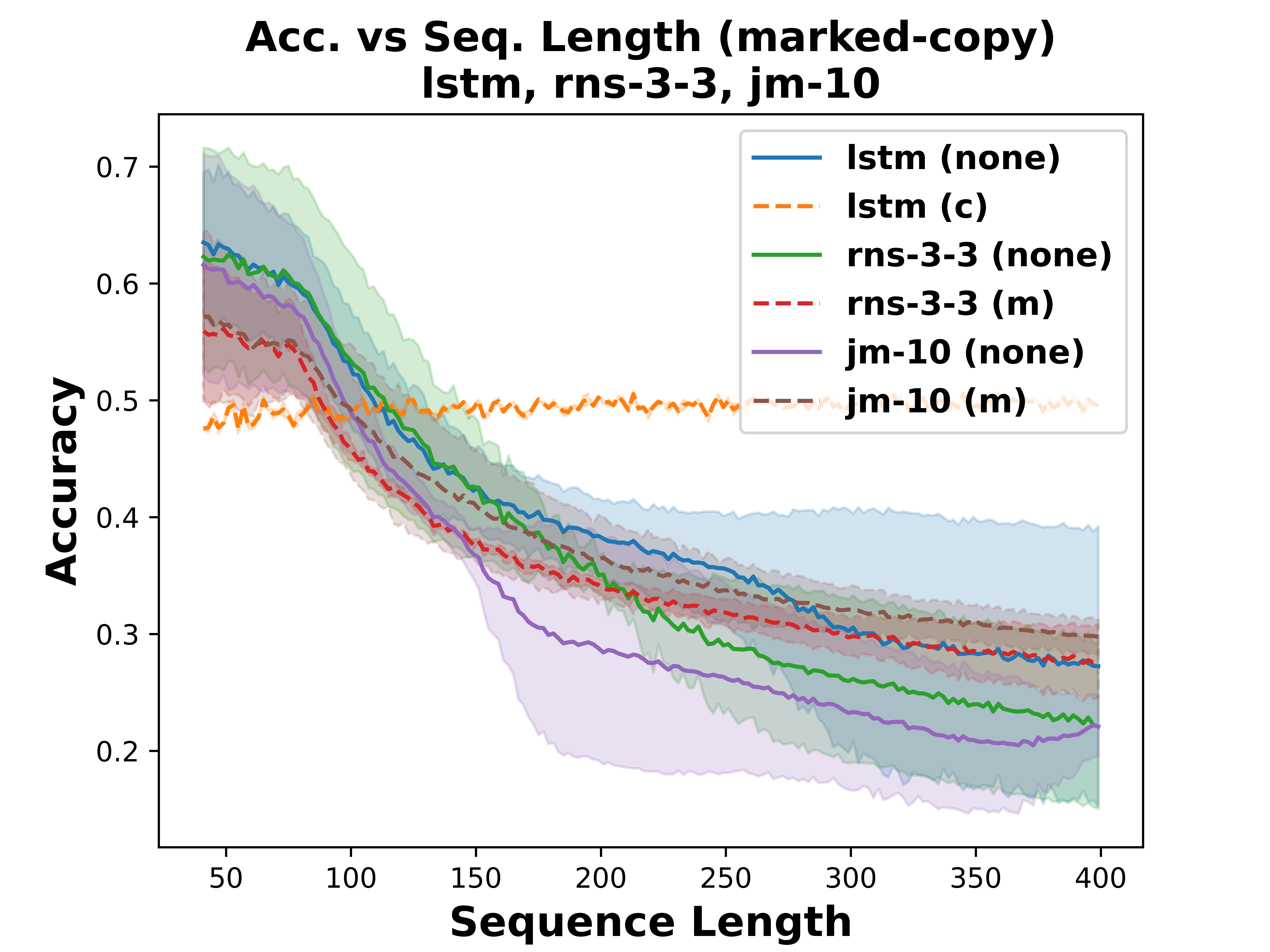}
        \caption{marked-copy}
        \label{fig:subfig4}
    \end{subfigure}
    \begin{subfigure}[b]{0.45\textwidth}
        \includegraphics[width=\textwidth]{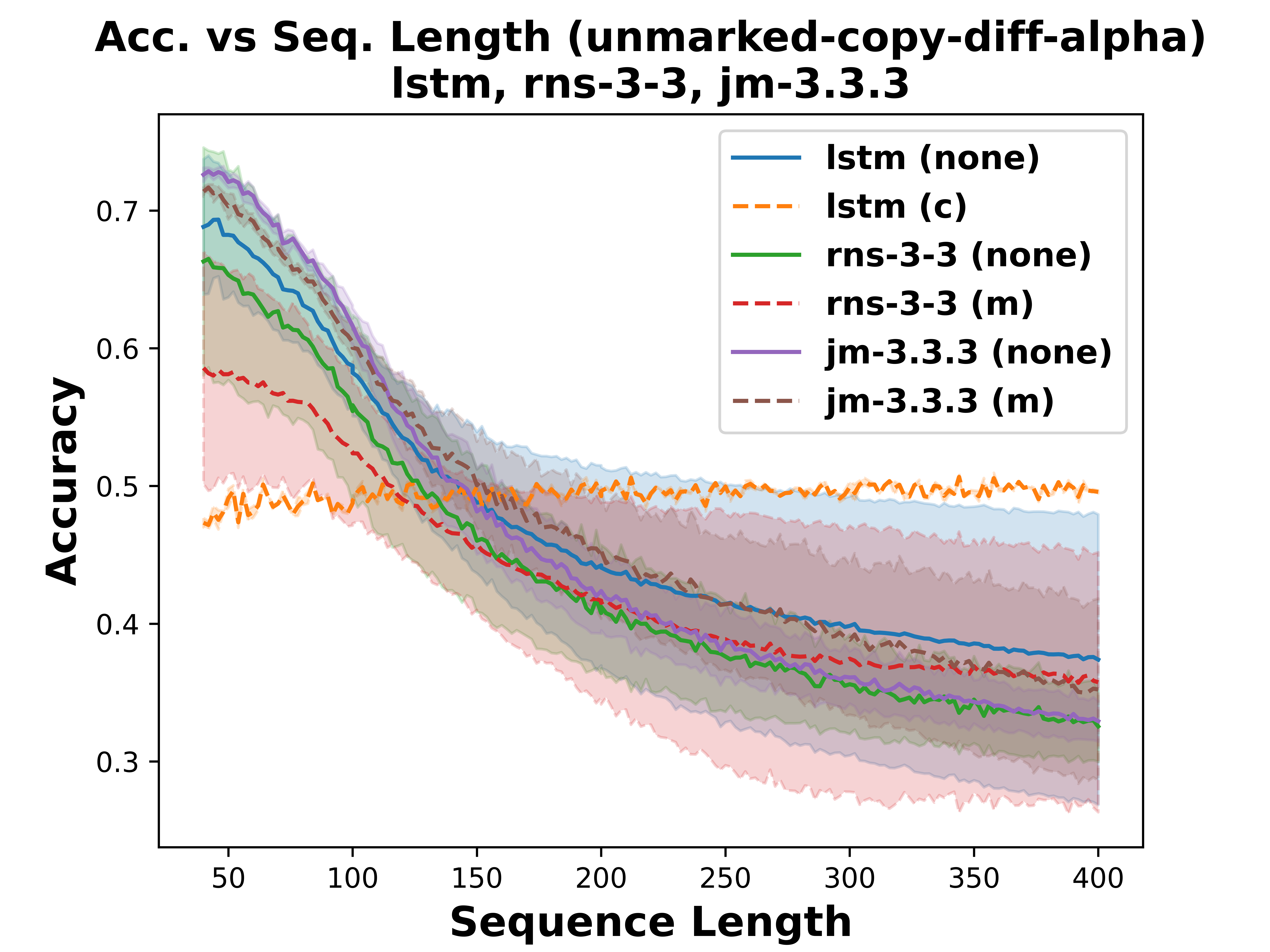}
        \caption{unmarked-copy-different-alphabets}
        \label{fig:subfig5}
    \end{subfigure}
    \begin{subfigure}[b]{0.45\textwidth}
        \includegraphics[width=\textwidth]{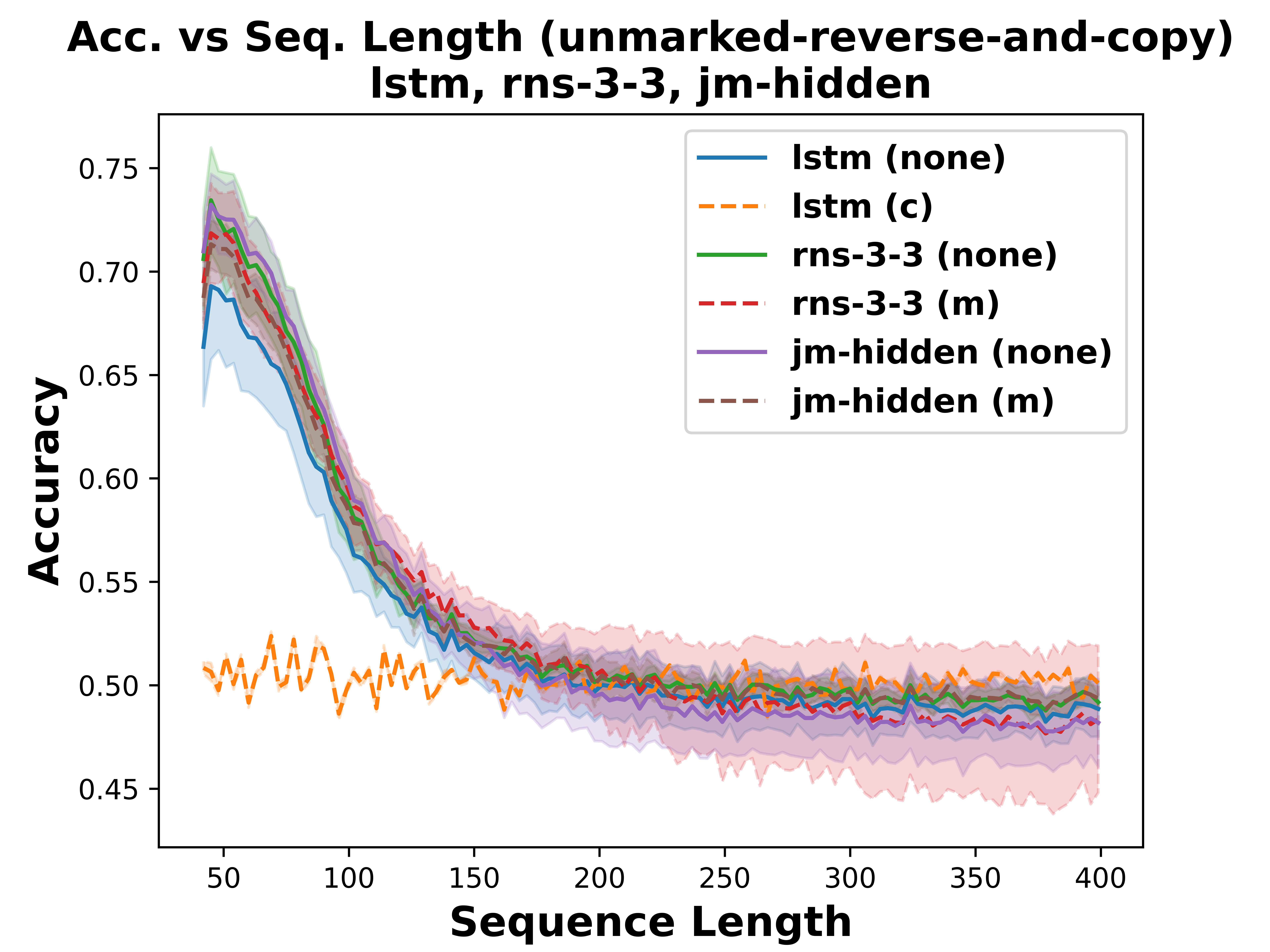}
        \caption{unmarked-reverse-and-copy}
        \label{fig:subfig6}
    \end{subfigure}
    \begin{subfigure}[b]{0.45\textwidth}
        \centering
        \includegraphics[width=\textwidth]{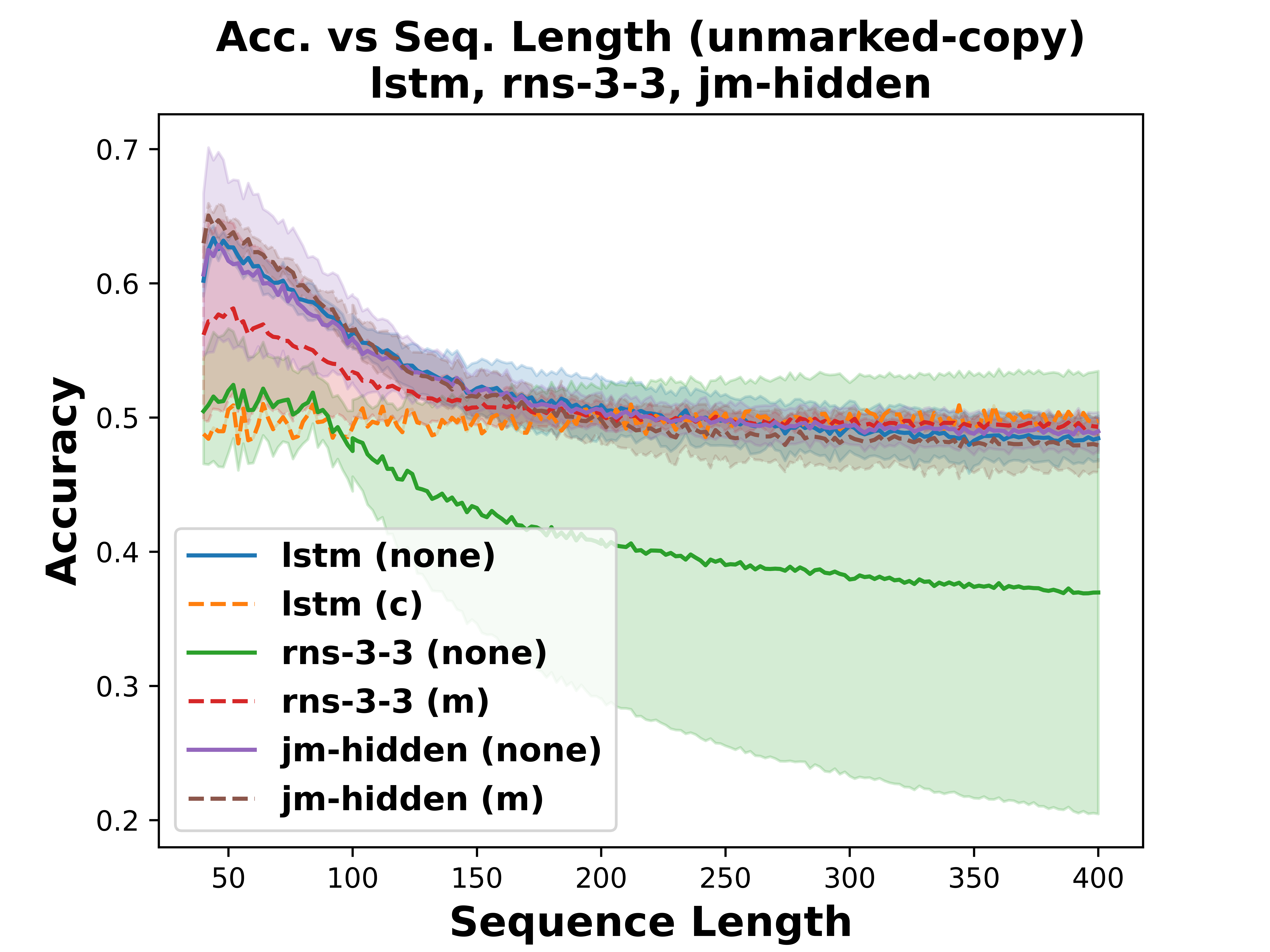}
        \caption{unmarked-copy}
        \label{fig:subfig7}
    \end{subfigure}
    
    \caption{Performance of top 3 models on test sets across 7 context free languages, when models are fully trained (none) and when only memory (m) is frozen. }
    \label{fig:mainfigure}
\end{figure*}

\begin{figure*}[htbp]
    \begin{subfigure}[b]{0.45\textwidth}
        \includegraphics[width=\textwidth]{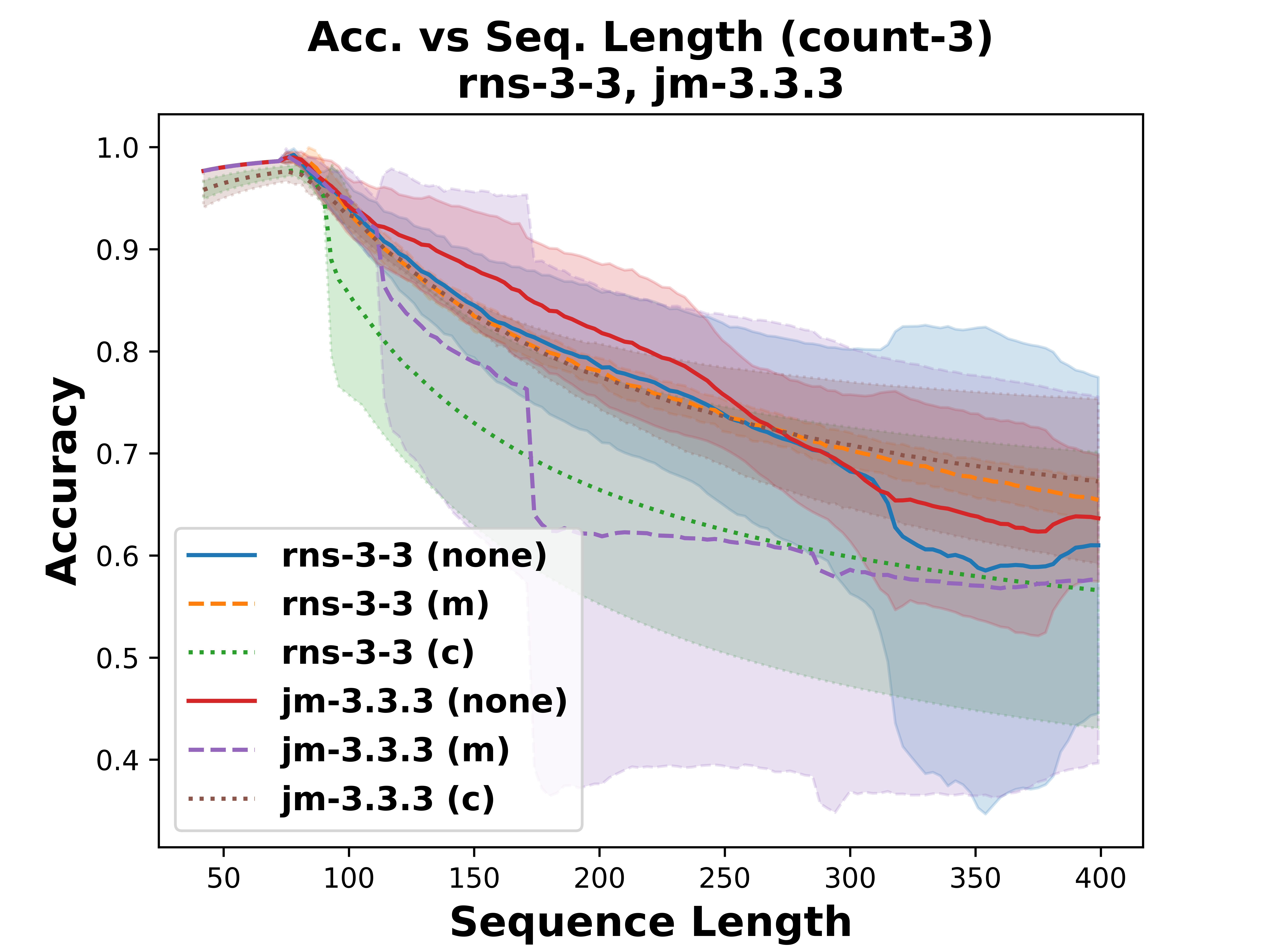}
        \caption{count-3}
        \label{fig:subfig1}
    \end{subfigure}
    \begin{subfigure}[b]{0.45\textwidth}
        \includegraphics[width=\textwidth]{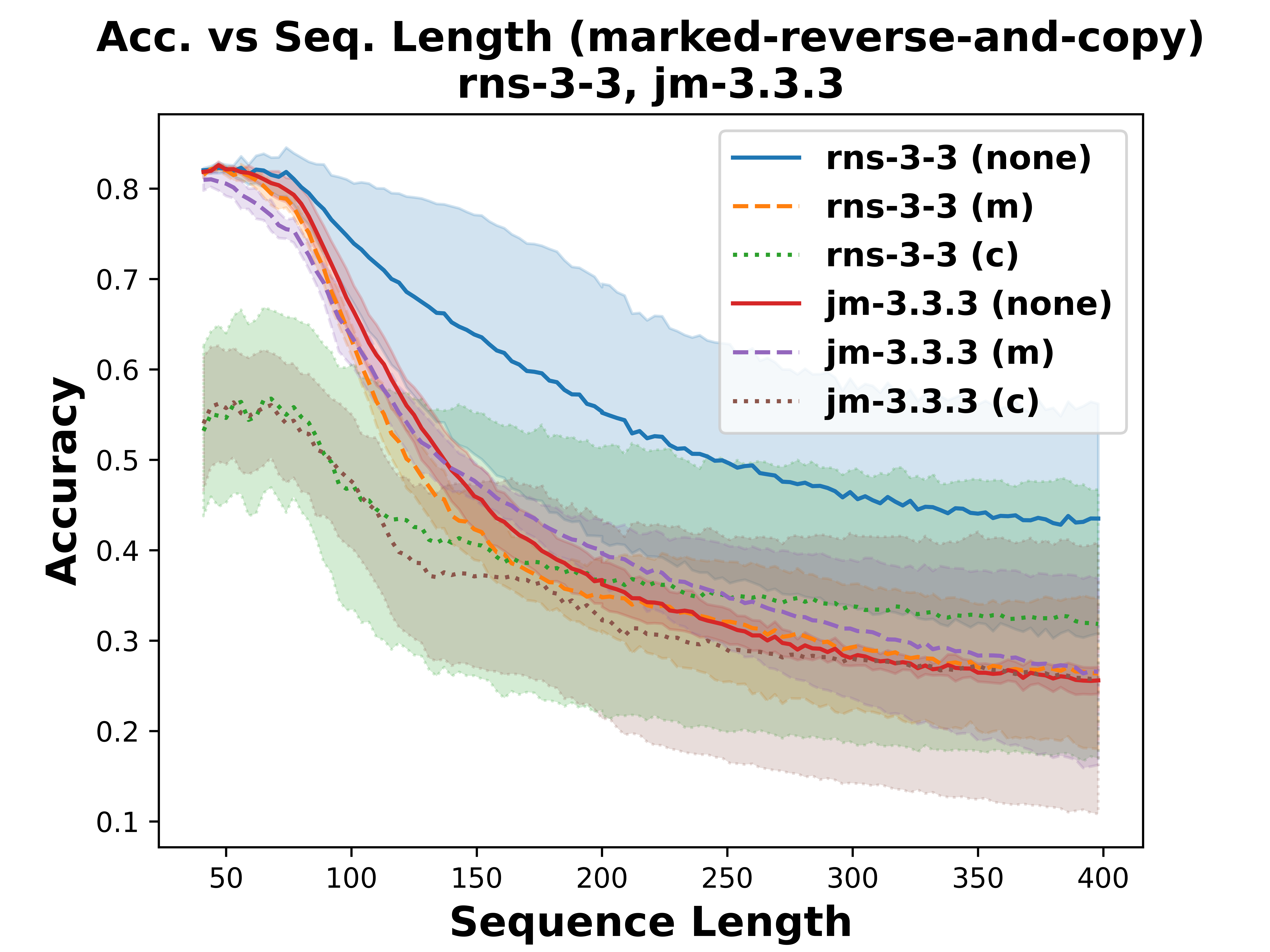}
        \caption{marked-reverse-and-copy}
        \label{fig:subfig2}
    \end{subfigure}
    \begin{subfigure}[b]{0.45\textwidth}
        \includegraphics[width=\textwidth]{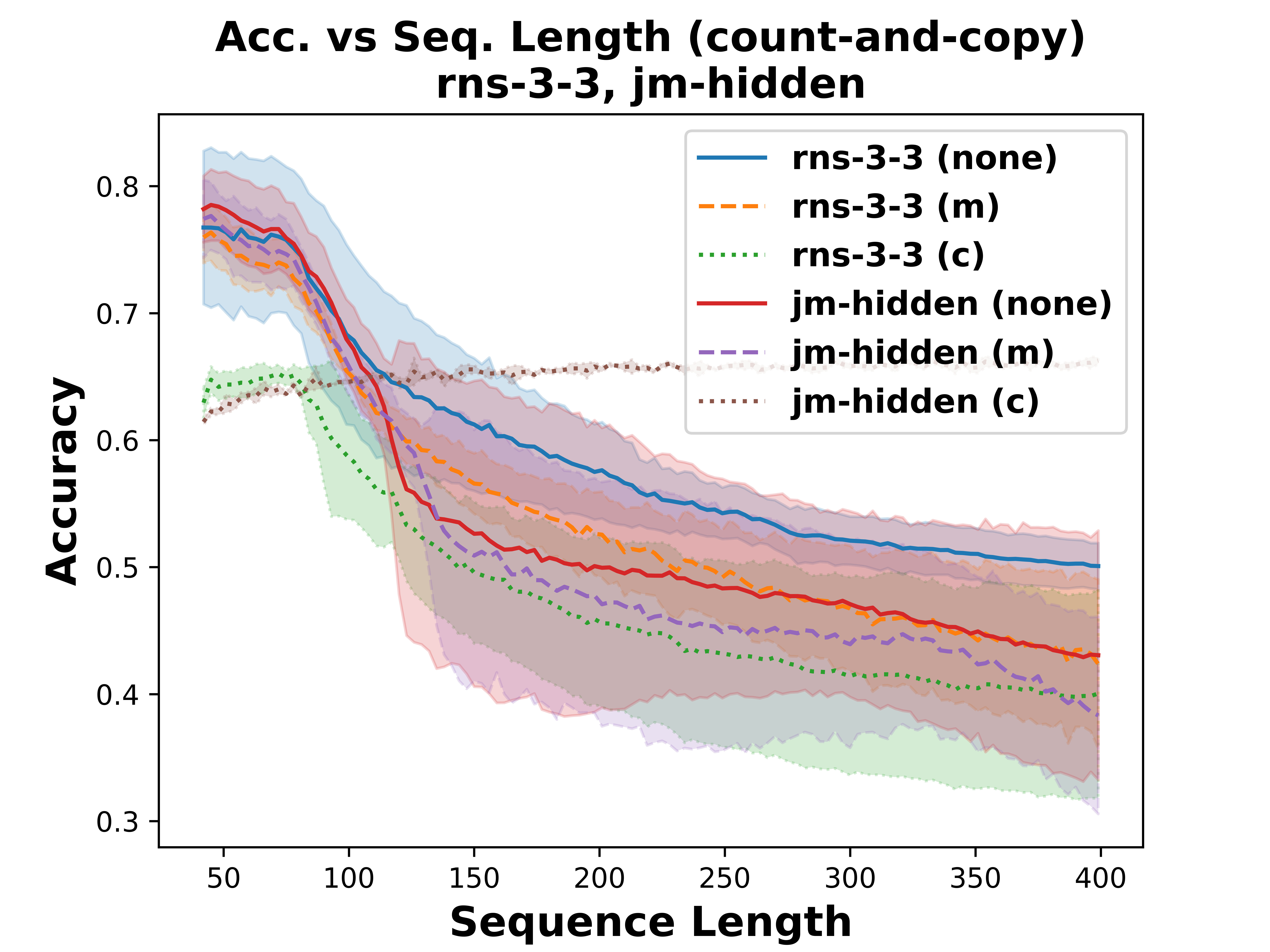}
        \caption{count-and-copy}
        \label{fig:subfig3}
    \end{subfigure}
    \begin{subfigure}[b]{0.45\textwidth}
        \includegraphics[width=\textwidth]{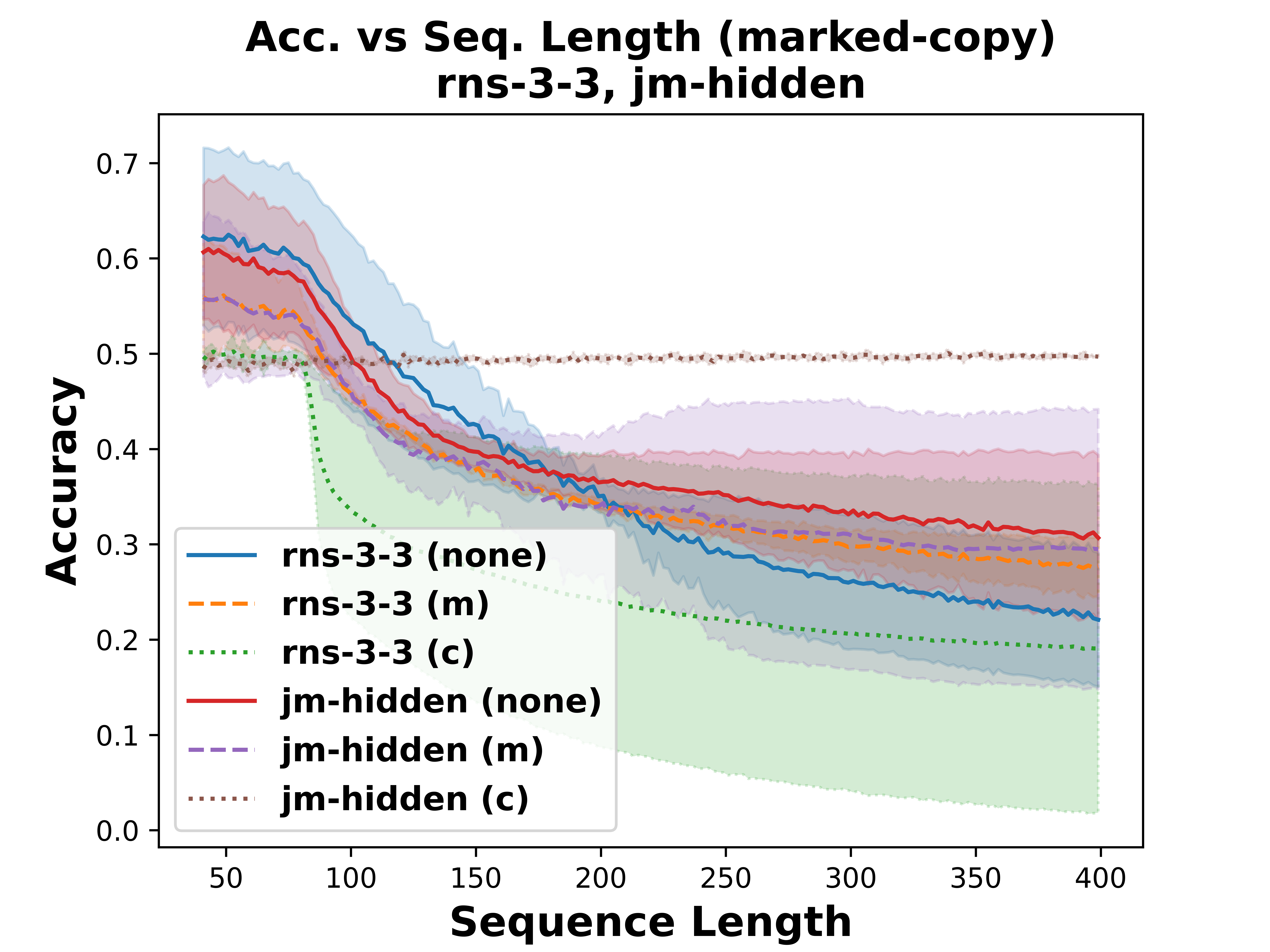}
        \caption{marked-copy}
        \label{fig:subfig4}
    \end{subfigure}
    \begin{subfigure}[b]{0.45\textwidth}
        \includegraphics[width=\textwidth]{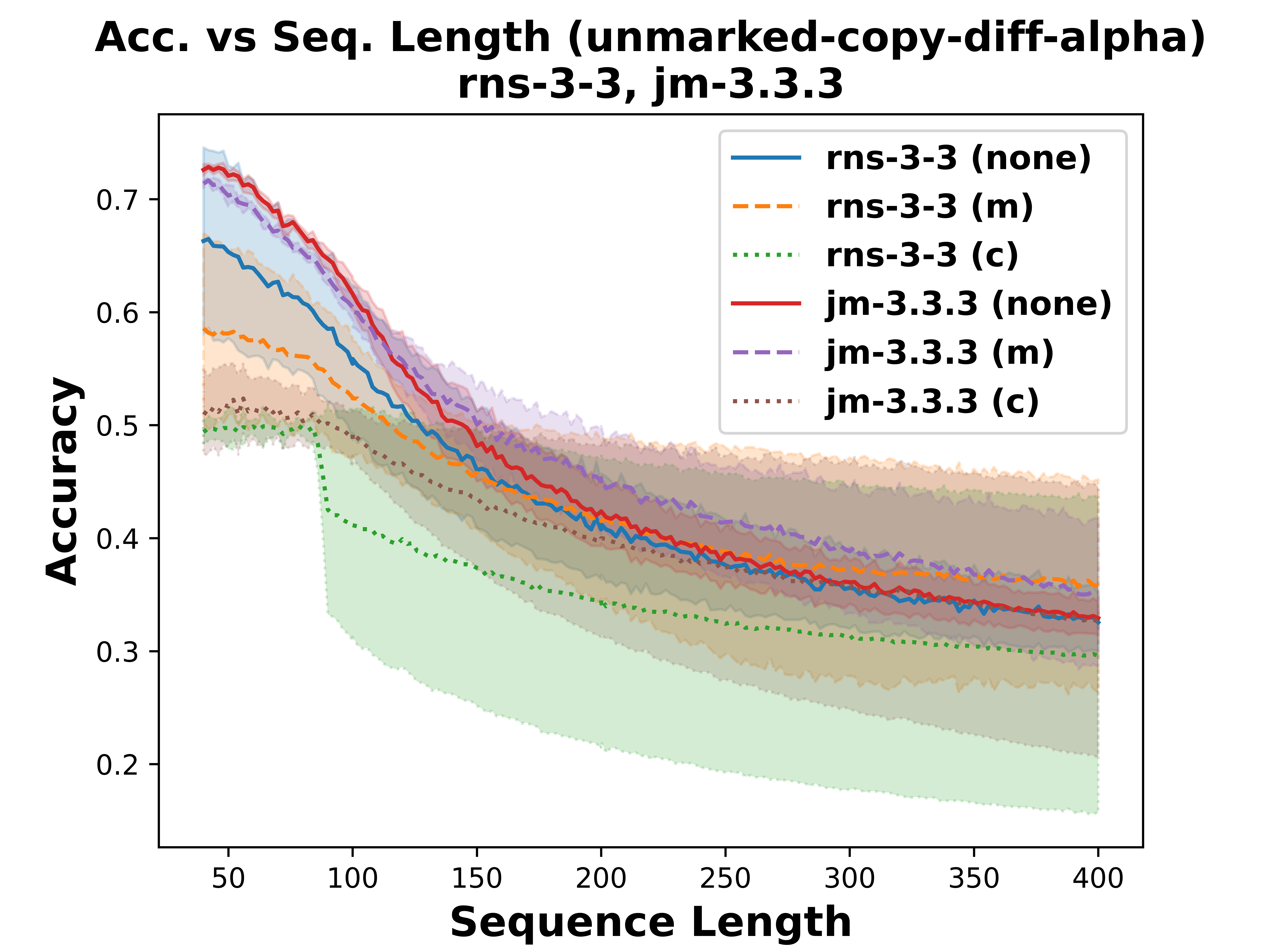}
        \caption{unmarked-copy-different-alphabets}
        \label{fig:subfig5}
    \end{subfigure}
    \begin{subfigure}[b]{0.45\textwidth}
        \includegraphics[width=\textwidth]{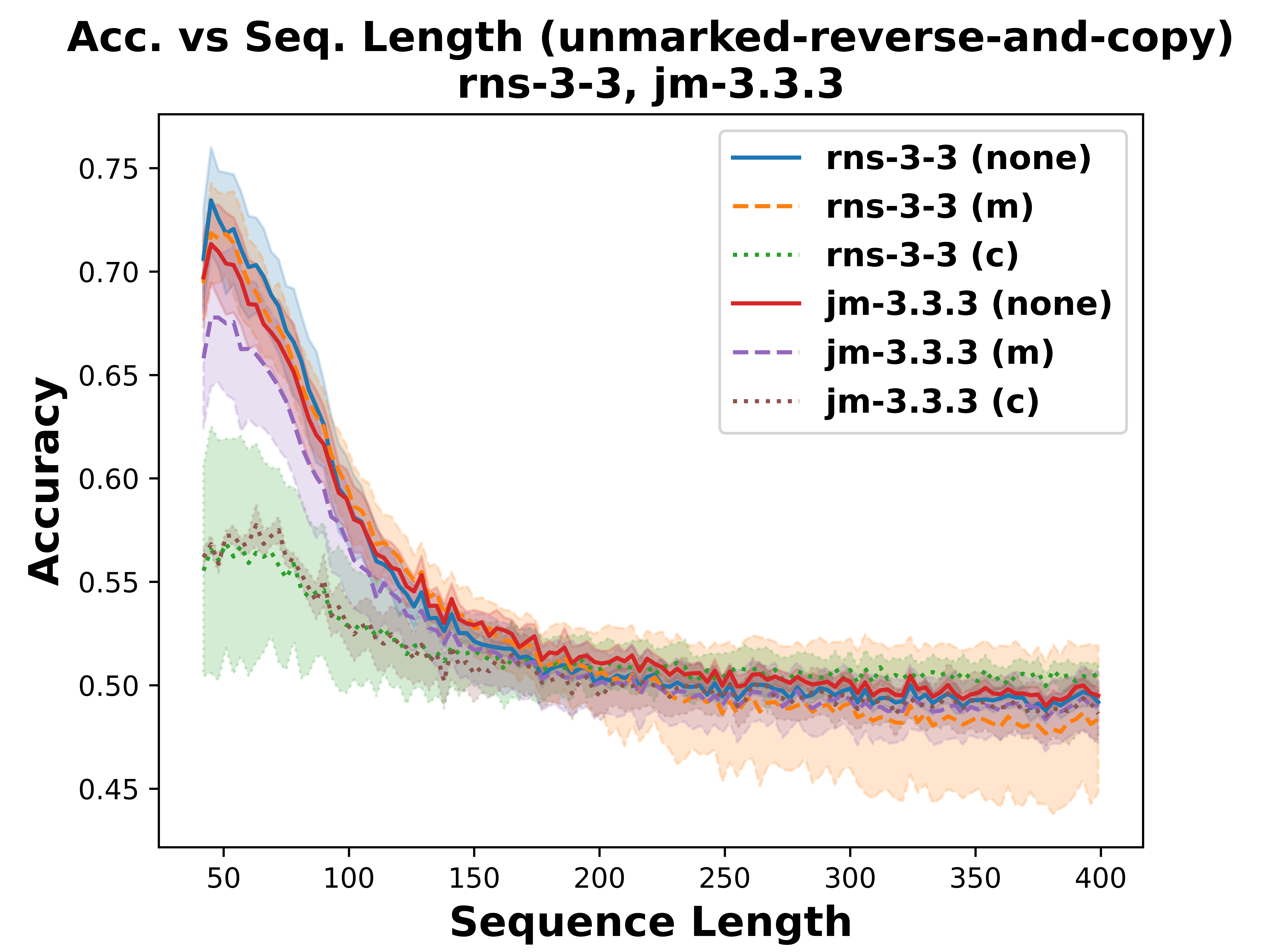}
        \caption{unmarked-reverse-and-copy}
        \label{fig:subfig6}
    \end{subfigure}

    \begin{subfigure}[b]{0.45\textwidth}
        \centering
        \includegraphics[width=\textwidth]{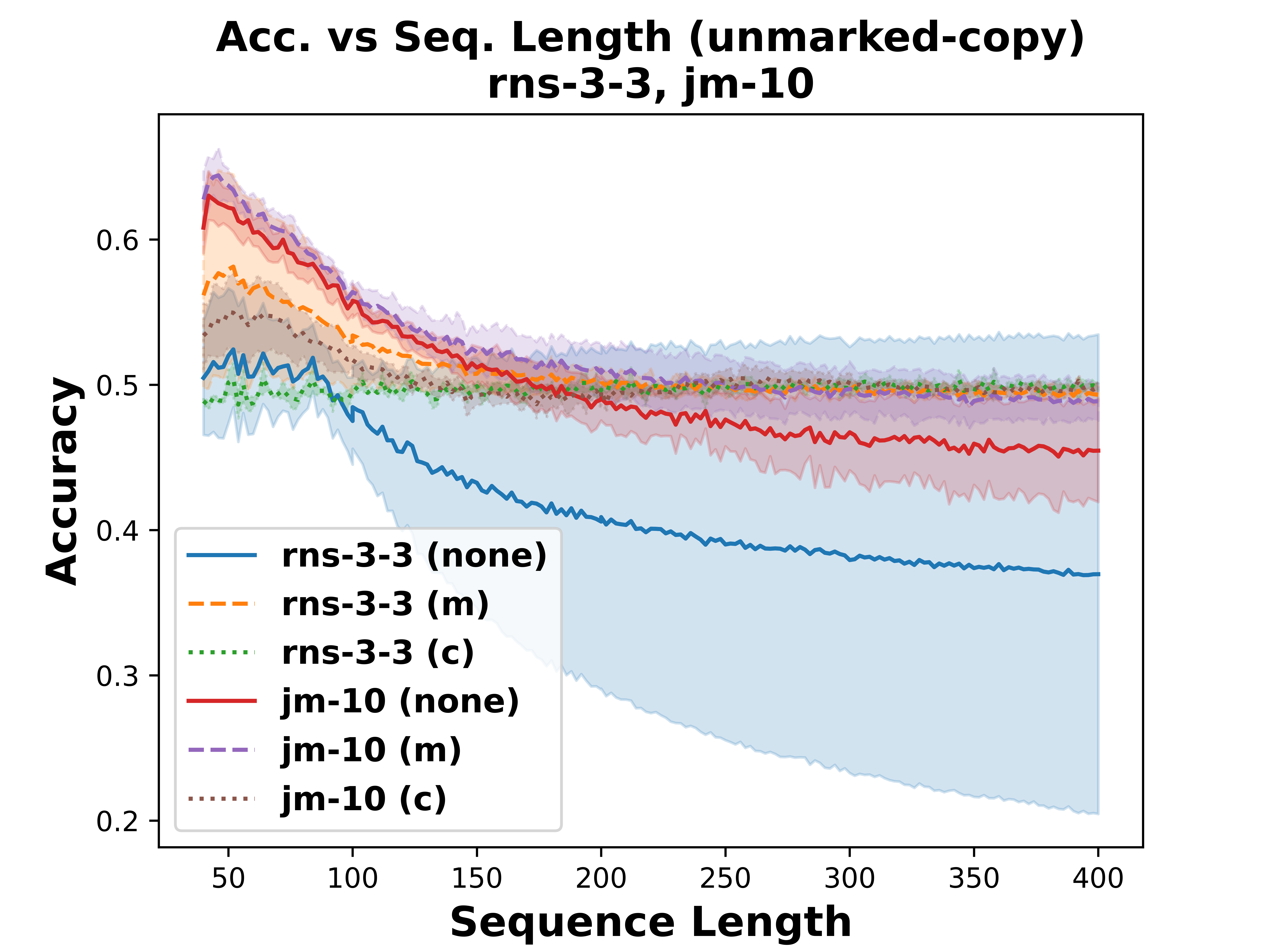}
        \caption{unmarked-copy}
        \label{fig:subfig7}
    \end{subfigure}
    
    \caption{Performance of top 2 memory-augmented models on test sets across 7 context free languages, when models are fully trained (none) and when only memory (m) is frozen.}
    \label{fig:mainfigure}
\end{figure*}



\begin{table*}[htbp]
    \footnotesize
    \begin{tabular}{|l|l|c|c|c|c|c|c|c|}
        \hline
        \multirow{2}{*}{\textbf{Task}} & \multirow{2}{*}{\textbf{Model}} & \multicolumn{7}{c|}{\textbf{Metrics}} \\ \cline{3-9} 
        & & \textbf{Val PPL} & \textbf{Bin0 PPL} & \textbf{Bin1 PPL} & \textbf{Bin2 PPL} & \textbf{Bin0 Acc} & \textbf{Bin1 Acc} & \textbf{Bin2 Acc} \\ \hline

		\multirow{5}{*}{\textbf{count-3}}
		& lstm & \textbf{1.04} & 1.06 & 1.54 & 1772.19 & 0.98 & 0.87 & 0.75 \\ \cline{2-9}
		& jm-10 & \textbf{1.04} & \textbf{1.05} & 1.18 & 1.67 & \textbf{0.99} & 0.90 & 0.76 \\ \cline{2-9}
		& jm-3.3.3 & \textbf{1.04} & \textbf{1.05} & 1.42 & 11.23 & \textbf{0.99} & 0.93 & 0.78 \\ \cline{2-9}
		& jm-hidden & \textbf{1.04} & \textbf{1.05} & \textbf{1.14} & \textbf{1.22} & \textbf{0.99} & \textbf{0.94} & 0.81 \\ \cline{2-9}
		& rns-3-3 & \textbf{1.04} & 1.06 & 1.35 & 155.42 & 0.98 & \textbf{0.94} & \textbf{0.85} \\ \hline

		\multirow{5}{*}{\textbf{marked-reverse-and-copy}}
		& lstm & 1.40 & 1.56 & 10.65 & \textbf{10.57} & 0.76 & 0.49 & 0.48 \\ \cline{2-9}
		& jm-10 & 1.33 & 1.42 & 3.55 & 182.41 & 0.80 & 0.50 & 0.43 \\ \cline{2-9}
		& jm-3.3.3 & 1.33 & 1.40 & 11.46 & 1060.93 & 0.80 & 0.50 & 0.31 \\ \cline{2-9}
		& jm-hidden & 1.32 & 1.49 & 103.81 & 12.71 & 0.79 & 0.48 & 0.48 \\ \cline{2-9}
		& rns-3-3 & \textbf{1.30} & \textbf{1.33} & \textbf{1.68} & 18.47 & \textbf{0.82} & \textbf{0.75} & \textbf{0.67} \\ \hline

		\multirow{5}{*}{\textbf{count-and-copy}}
		& lstm & 1.48 & 1.57 & 13.71 & 18.75 & 0.73 & 0.61 & 0.61 \\ \cline{2-9}
		& jm-10 & 1.39 & 1.61 & 368.71 & 10.42 & 0.76 & 0.60 & 0.59 \\ \cline{2-9}
		& jm-3.3.3 & 1.37 & 1.56 & \textbf{1.97} & 4.93 & 0.76 & 0.64 & \textbf{0.62} \\ \cline{2-9}
		& jm-hidden & 1.34 & 1.41 & 13.04 & \textbf{3.53} & 0.80 & 0.61 & 0.57 \\ \cline{2-9}
		& rns-3-3 & \textbf{1.31} & \textbf{1.38} & 2.87 & 19.00 & \textbf{0.81} & \textbf{0.69} & 0.55 \\ \hline

		\multirow{5}{*}{\textbf{marked-copy}}
		& lstm & 1.66 & 1.81 & 3.11 & 17.58 & 0.65 & 0.49 & 0.43 \\ \cline{2-9}
		& jm-10 & 1.63 & 1.77 & 14.13 & 16.13 & 0.66 & 0.40 & 0.30 \\ \cline{2-9}
		& jm-3.3.3 & 1.68 & 1.77 & \textbf{2.55} & \textbf{2.79} & 0.65 & \textbf{0.53} & \textbf{0.50} \\ \cline{2-9}
		& jm-hidden & 1.59 & 1.74 & 2.62 & 15.53 & 0.68 & 0.42 & 0.43 \\ \cline{2-9}
		& rns-3-3 & \textbf{1.47} & \textbf{1.51} & 11.05 & 2.82 & \textbf{0.73} & 0.50 & 0.32 \\ \hline

		\multirow{5}{*}{\textbf{unmarked-copy-diff-alpha}}
		& lstm & 1.61 & 1.68 & 2.66 & 17.51 & 0.68 & \textbf{0.56} & \textbf{0.52} \\ \cline{2-9}
		& jm-10 & \textbf{1.56} & \textbf{1.65} & 3.10 & 12.99 & \textbf{0.70} & 0.53 & 0.43 \\ \cline{2-9}
		& jm-3.3.3 & 1.57 & \textbf{1.65} & 3.00 & 14.73 & 0.68 & 0.52 & 0.39 \\ \cline{2-9}
		& jm-hidden & 1.58 & 1.70 & 2.40 & \textbf{10.02} & 0.68 & 0.48 & 0.46 \\ \cline{2-9}
		& rns-3-3 & \textbf{1.56} & 1.66 & \textbf{2.20} & 13.60 & 0.69 & 0.52 & 0.41 \\ \hline

		\multirow{5}{*}{\textbf{unmarked-reverse-and-copy}}
		& lstm & 1.66 & 1.84 & 3.32 & 3.93 & 0.67 & 0.54 & \textbf{0.51} \\ \cline{2-9}
		& jm-10 & \textbf{1.60} & \textbf{1.78} & 3.65 & 10.15 & \textbf{0.70} & \textbf{0.56} & \textbf{0.51} \\ \cline{2-9}
		& jm-3.3.3 & 1.64 & 1.83 & \textbf{3.24} & \textbf{3.66} & 0.68 & 0.54 & 0.50 \\ \cline{2-9}
		& jm-hidden & \textbf{1.60} & 1.81 & 4.21 & 11.75 & 0.69 & 0.54 & 0.50 \\ \cline{2-9}
		& rns-3-3 & \textbf{1.60} & \textbf{1.78} & 3.65 & 13.46 & 0.69 & 0.53 & 0.50 \\ \hline

		\multirow{5}{*}{\textbf{unmarked-copy}}
		& lstm & 1.85 & 1.92 & 2.61 & 3.19 & 0.60 & 0.54 & 0.51 \\ \cline{2-9}
		& jm-10 & 1.82 & 1.91 & 2.49 & 2.69 & 0.61 & 0.53 & \textbf{0.50} \\ \cline{2-9}
		& jm-3.3.3 & 1.83 & 1.92 & 2.20 & 2.25 & 0.61 & 0.53 & \textbf{0.50} \\ \cline{2-9}
		& jm-hidden & \textbf{1.76} & \textbf{1.86} & \textbf{2.16} & \textbf{2.18} & 0.63 & 0.55 & 0.51 \\ \cline{2-9}
		& rns-3-3 & 1.94 & 2.06 & 2.19 & 2.23 & \textbf{0.56} & \textbf{0.50} & \textbf{0.50} \\ \hline
        
    \end{tabular}
    \caption{Performance of different models on context free languages when all models components are fully trained (none). We test all models on 3 independent test set, slightly long test set bin 0 (N = $[40-100]$), mid range test (N=$[100-200]$)set bin1 and bin2 (N = $[200-400]$}
    \label{tab:tasks}
\end{table*}

\begin{table*}[htbp]
    \footnotesize
    \begin{tabular}{|l|l|c|c|c|c|c|c|c|}
        \hline
        \multirow{2}{*}{\textbf{Task}} & \multirow{2}{*}{\textbf{Model}} & \multicolumn{7}{c|}{\textbf{Metrics}} \\ \cline{3-9} 
        & & \textbf{Val PPL} & \textbf{Bin0 PPL} & \textbf{Bin1 PPL} & \textbf{Bin2 PPL} & \textbf{Bin0 Acc} & \textbf{Bin1 Acc} & \textbf{Bin2 Acc} \\ \hline
        
		\multirow{5}{*}{\textbf{count-3}}
		& lstm & 1.19 & 1.17 & \textbf{1.12} & \textbf{1.10} & 0.96 & \textbf{0.98} & 0.99 \\ \cline{2-9}
		& jm-10 & 1.10 & 1.12 & 1.32 & 1.43 & 0.97 & 0.90 & 0.79 \\ \cline{2-9}
		& jm-3.3.3 & \textbf{1.05} & \textbf{1.08} & 1.57 & 14.65 & \textbf{0.98} & 0.85 & 0.76 \\ \cline{2-9}
		& jm-hidden & 1.15 & 1.16 & 1.16 & 1.18 & 0.96 & \textbf{0.98} & 0.99 \\ \cline{2-9}
		& rns-3-3 & \textbf{1.05} & 1.21 & 136.65 & 126512.54 & 0.97 & 0.83 & \textbf{0.75} \\ \hline

		\multirow{5}{*}{\textbf{marked-reverse-and-copy}}
		& lstm & 2.38 & 2.35 & 2.23 & \textbf{2.18} & 0.49 & 0.50 & 0.50 \\ \cline{2-9}
		& jm-10 & 2.17 & 2.27 & 2.45 & 2.39 & 0.50 & 0.48 & 0.49 \\ \cline{2-9}
		& jm-3.3.3 & 1.83 & 1.92 & 10.43 & 12.49 & 0.66 & 0.50 & 0.50 \\ \cline{2-9}
		& jm-hidden & 2.27 & 2.26 & 2.21 & 2.19 & 0.56 & 0.53 & 0.51 \\ \cline{2-9}
		& rns-3-3 & \textbf{1.55} & \textbf{1.58} & \textbf{1.95} & 2.29 & \textbf{0.75} & \textbf{0.65} & \textbf{0.55} \\ \hline

		\multirow{5}{*}{\textbf{count-and-copy}}
		& lstm & 1.93 & 1.89 & \textbf{1.79} & \textbf{1.74} & 0.64 & \textbf{0.65} & \textbf{0.66} \\ \cline{2-9}
		& jm-10 & 1.70 & 1.79 & 15.26 & 10.83 & 0.63 & 0.50 & 0.42 \\ \cline{2-9}
		& jm-3.3.3 & 1.66 & 1.81 & 10.60 & 108.02 & 0.63 & 0.43 & 0.30 \\ \cline{2-9}
		& jm-hidden & 1.85 & 1.84 & \textbf{1.79} & \textbf{1.74} & 0.64 & \textbf{0.65} & \textbf{0.66} \\ \cline{2-9}
		& rns-3-3 & \textbf{1.65} & \textbf{1.71} & 1.95 & 13.67 & \textbf{0.65} & 0.57 & 0.53 \\ \hline

		\multirow{5}{*}{\textbf{marked-copy}}
		& lstm & 2.30 & 2.27 & 2.16 & \textbf{2.12} & \textbf{0.49} & 0.49 & \textbf{0.50} \\ \cline{2-9}
		& jm-10 & 2.11 & 2.20 & 2.37 & 17.03 & \textbf{0.49} & 0.49 & \textbf{0.50} \\ \cline{2-9}
		& jm-3.3.3 & 2.13 & 2.18 & 2.37 & 2.37 & \textbf{0.49} & \textbf{0.50} & \textbf{0.50} \\ \cline{2-9}
		& jm-hidden & 2.24 & 2.22 & \textbf{2.15} & \textbf{2.12} & \textbf{0.49} & 0.49 & \textbf{0.50} \\ \cline{2-9}
		& rns-3-3 & \textbf{2.07} & \textbf{2.16} & 2.90 & 13.93 & \textbf{0.49} & 0.49 & \textbf{0.50} \\ \hline

		\multirow{5}{*}{\textbf{unmarked-copy-diff-alpha}}
		& lstm & 2.28 & 2.25 & \textbf{2.16} & \textbf{2.12} & 0.49 & 0.49 & \textbf{0.50} \\ \cline{2-9}
		& jm-10 & 2.04 & 2.14 & 13.00 & 10846.00 & 0.55 & 0.46 & 0.36 \\ \cline{2-9}
		& jm-3.3.3 & \textbf{2.03} & \textbf{2.12} & 19.82 & 2.62 & \textbf{0.56} & 0.49 & \textbf{0.50} \\ \cline{2-9}
		& jm-hidden & 2.21 & 2.21 & 2.19 & 2.17 & 0.51 & \textbf{0.50} & \textbf{0.50} \\ \cline{2-9}
		& rns-3-3 & 2.08 & 11.02 & 2.31 & 181.99 & 0.50 & 0.49 & \textbf{0.50} \\ \hline

		\multirow{5}{*}{\textbf{unmarked-reverse-and-copy}}
		& lstm & 2.15 & 2.13 & \textbf{2.08} & \textbf{2.06} & 0.51 & 0.50 & 0.50 \\ \cline{2-9}
		& jm-10 & 2.05 & 2.12 & 2.58 & 2.81 & 0.55 & 0.50 & 0.49 \\ \cline{2-9}
		& jm-3.3.3 & 2.04 & 2.10 & 2.47 & 2.71 & 0.56 & 0.52 & 0.51 \\ \cline{2-9}
		& jm-hidden & 2.10 & 2.11 & 2.12 & 2.13 & 0.51 & 0.51 & 0.50 \\ \cline{2-9}
		& rns-3-3 & \textbf{1.87} & \textbf{1.97} & 2.17 & 2.20 & \textbf{0.62} & \textbf{0.55} & \textbf{0.52} \\ \hline

		\multirow{5}{*}{\textbf{unmarked-copy}}
		& lstm & 2.15 & 2.13 & \textbf{2.08} & \textbf{2.06} & 0.49 & \textbf{0.50} & \textbf{0.50} \\ \cline{2-9}
		& jm-10 & \textbf{2.06} & 2.11 & 2.22 & 2.29 & \textbf{0.55} & \textbf{0.50} & \textbf{0.50} \\ \cline{2-9}
		& jm-3.3.3 & 2.03 & \textbf{2.09} & 2.20 & 2.27 & \textbf{0.55} & \textbf{0.50} & \textbf{0.50} \\ \cline{2-9}
		& jm-hidden & 2.11 & 2.11 & 2.14 & 2.16 & 0.51 & \textbf{0.50} & 0.50 \\ \cline{2-9}
		& rns-3-3 & 2.11 & 2.12 & 2.16 & 2.19 & 0.49 & \textbf{0.50} & \textbf{0.50} \\ \hline
        
    \end{tabular}
    \caption{Performance of different models on context free languages when only parameter belonging to controller are trainable (mode = c). We test all models on 3 independent test set, slightly long test set bin 0 (N = $[40-100]$), mid range test (N=$[100-200]$)set bin1 and bin2 (N = $[200-400]$.}
    \label{tab:tasks}
\end{table*}

\begin{table*}[htbp]
    \footnotesize
    \begin{tabular}{|l|l|c|c|c|c|c|c|c|}
        \hline
        \multirow{2}{*}{\textbf{Task}} & \multirow{2}{*}{\textbf{Model}} & \multicolumn{7}{c|}{\textbf{Metrics}} \\ \cline{3-9} 
        & & \textbf{Val PPL} & \textbf{Bin0 PPL} & \textbf{Bin1 PPL} & \textbf{Bin2 PPL} & \textbf{Bin0 Acc} & \textbf{Bin1 Acc} & \textbf{Bin2 Acc} \\ \hline
        
		\multirow{5}{*}{\textbf{count-3}}
		& lstm & \textbf{1.04} & \textbf{1.05} & 1.28 & 1.88 & \textbf{0.99} & 0.90 & 0.75 \\ \cline{2-9}
		& jm-10 & \textbf{1.04} & \textbf{1.05} & 1.21 & 1.70 & 0.98 & 0.86 & 0.76 \\ \cline{2-9}
		& jm-3.3.3 & \textbf{1.04} & 1.06 & 1.25 & \textbf{1.69} & 0.98 & 0.92 & 0.78 \\ \cline{2-9}
		& jm-hidden & \textbf{1.04} & \textbf{1.05} & \textbf{1.19} & 1.70 & \textbf{0.99} & \textbf{0.99} & \textbf{0.97} \\ \cline{2-9}
		& rns-3-3 & \textbf{1.04} & \textbf{1.05} & 1.36 & 2.87 & 0.98 & 0.85 & 0.72 \\ \hline

		\multirow{5}{*}{\textbf{marked-reverse-and-copy}}
		& lstm & 1.38 & 1.55 & \textbf{11.64} & \textbf{3.56} & 0.77 & 0.48 & 0.48 \\ \cline{2-9}
		& jm-10 & 1.38 & 1.51 & 20.13 & 10.30 & 0.77 & 0.49 & 0.47 \\ \cline{2-9}
		& jm-3.3.3 & 1.41 & 1.54 & 12.47 & 11.30 & 0.76 & 0.49 & 0.46 \\ \cline{2-9}
		& jm-hidden & 1.37 & \textbf{1.49} & 134.47 & 13.62 & \textbf{0.78} & \textbf{0.56} & \textbf{0.50} \\ \cline{2-9}
		& rns-3-3 & \textbf{1.36} & 1.52 & 23.70 & 233.30 & \textbf{0.78} & 0.47 & 0.41 \\ \hline

		\multirow{5}{*}{\textbf{count-and-copy}}
		& lstm & 1.43 & \textbf{1.50} & \textbf{1.93} & 16.86 & \textbf{0.76} & 0.63 & 0.54 \\ \cline{2-9}
		& jm-10 & 1.45 & 1.53 & 2.01 & \textbf{2.39} & \textbf{0.76} & \textbf{0.64} & \textbf{0.56} \\ \cline{2-9}
		& jm-3.3.3 & 1.49 & 1.56 & 2.41 & 1266.62 & 0.73 & 0.60 & 0.53 \\ \cline{2-9}
		& jm-hidden & \textbf{1.40} & 1.57 & 131.80 & 115.32 & \textbf{0.76} & 0.60 & 0.54 \\ \cline{2-9}
		& rns-3-3 & 1.43 & 1.58 & 16.84 & 175.87 & 0.74 & 0.59 & 0.53 \\ \hline

		\multirow{5}{*}{\textbf{marked-copy}}
		& lstm & 1.65 & \textbf{1.77} & 2.73 & 10.88 & \textbf{0.67} & 0.49 & 0.42 \\ \cline{2-9}
		& jm-10 & 1.75 & 1.81 & 2.32 & 2.92 & 0.63 & \textbf{0.50} & 0.36 \\ \cline{2-9}
		& jm-3.3.3 & 1.76 & 1.85 & \textbf{2.28} & \textbf{2.26} & 0.62 & 0.49 & \textbf{0.50} \\ \cline{2-9}
		& jm-hidden & \textbf{1.63} & 1.89 & 159.06 & 103370.22 & 0.63 & 0.47 & \textbf{0.50} \\ \cline{2-9}
		& rns-3-3 & 1.79 & 2.02 & 18.16 & 2.82 & 0.57 & 0.38 & 0.32 \\ \hline

		\multirow{5}{*}{\textbf{unmarked-copy-diff-alpha}}
		& lstm & 1.62 & \textbf{1.69} & \textbf{2.26} & 1103.65 & 0.67 & 0.54 & 0.47 \\ \cline{2-9}
		& jm-10 & \textbf{1.59} & 1.72 & 2.49 & 1242.00 & \textbf{0.68} & 0.51 & 0.39 \\ \cline{2-9}
		& jm-3.3.3 & 1.63 & 1.72 & 2.27 & 17.13 & 0.67 & \textbf{0.55} & 0.46 \\ \cline{2-9}
		& jm-hidden & 1.60 & 1.70 & 2.32 & 17.39 & 0.67 & 0.54 & \textbf{0.51} \\ \cline{2-9}
		& rns-3-3 & 1.71 & 1.82 & 13.09 & \textbf{13.72} & 0.65 & 0.52 & 0.49 \\ \hline

		\multirow{5}{*}{\textbf{unmarked-reverse-and-copy}}
		& lstm & 1.64 & \textbf{1.81} & 3.12 & \textbf{10.05} & 0.68 & 0.54 & 0.50 \\ \cline{2-9}
		& jm-10 & 1.68 & 1.86 & \textbf{3.02} & 10.41 & 0.66 & 0.54 & 0.50 \\ \cline{2-9}
		& jm-3.3.3 & 1.68 & 1.87 & 3.21 & 11.97 & 0.67 & 0.53 & 0.50 \\ \cline{2-9}
		& jm-hidden & 1.65 & 1.83 & 3.45 & 14.59 & 0.67 & 0.53 & 0.50 \\ \cline{2-9}
		& rns-3-3 & \textbf{1.61} & 1.82 & 3.29 & 11.07 & \textbf{0.69} & \textbf{0.56} & \textbf{0.51} \\ \hline

		\multirow{5}{*}{\textbf{unmarked-copy}}
		& lstm & 1.85 & 1.93 & \textbf{2.19} & \textbf{2.23} & 0.60 & \textbf{0.54} & \textbf{0.51} \\ \cline{2-9}
		& jm-10 & 1.82 & 1.92 & 2.44 & 2.67 & 0.61 & \textbf{0.54} & \textbf{0.51} \\ \cline{2-9}
		& jm-3.3.3 & 1.84 & 1.94 & 2.21 & 14.98 & 0.61 & 0.53 & 0.50 \\ \cline{2-9}
		& jm-hidden & \textbf{1.81} & \textbf{1.90} & 2.68 & 10.87 & \textbf{0.62} & \textbf{0.54} & 0.50 \\ \cline{2-9}
		& rns-3-3 & 1.85 & 1.94 & 2.27 & 2.38 & 0.60 & 0.53 & 0.50 \\ \hline
        
    \end{tabular}
    \caption{Performance of different models on context free languages when only parameter belonging to memory are trainable (mode = m). We test all models on 3 independent test set, slightly long test set bin 0 (N = $[40-100]$), mid range test (N=$[100-200]$)set bin1 and bin2 (N = $[200-400]$}
    \label{tab:tasks}
\end{table*}

\begin{table*}[htbp]
    \footnotesize
    \begin{tabular}{|l|l|c|c|c|c|c|c|c|}
        \hline
        \multirow{2}{*}{\textbf{Task}} & \multirow{2}{*}{\textbf{Model}} & \multicolumn{7}{c|}{\textbf{Metrics}} \\ \cline{3-9} 
        & & \textbf{Val PPL} & \textbf{Bin0 PPL} & \textbf{Bin1 PPL} & \textbf{Bin2 PPL} & \textbf{Bin0 Acc} & \textbf{Bin1 Acc} & \textbf{Bin2 Acc} \\ \hline
        
		\multirow{5}{*}{\textbf{count-3}}
		& lstm & 1.18 & 1.17 & \textbf{1.12} & \textbf{1.10} & \textbf{0.96} & \textbf{0.98} & \textbf{0.99} \\ \cline{2-9}
		& jm-10 & 1.17 & 1.17 & 1.20 & 1.33 & \textbf{0.96} & 0.96 & 0.83 \\ \cline{2-9}
		& jm-3.3.3 & 1.16 & 1.16 & 1.15 & 1.18 & \textbf{0.96} & \textbf{0.98} & \textbf{0.99} \\ \cline{2-9}
		& jm-hidden & \textbf{1.15} & \textbf{1.15} & \textbf{1.12} & 1.11 & \textbf{0.96} & \textbf{0.98} & \textbf{0.99} \\ \cline{2-9}
		& rns-3-3 & 1.19 & 1.17 & \textbf{1.12} & \textbf{1.10} & \textbf{0.96} & \textbf{0.98} & \textbf{0.99} \\ \hline

		\multirow{5}{*}{\textbf{marked-reverse-and-copy}}
		& lstm & 2.38 & 2.35 & \textbf{2.23} & \textbf{2.18} & 0.49 & \textbf{0.50} & \textbf{0.50} \\ \cline{2-9}
		& jm-10 & \textbf{2.34} & \textbf{2.33} & 2.26 & 2.25 & \textbf{0.50} & \textbf{0.50} & \textbf{0.50} \\ \cline{2-9}
		& jm-3.3.3 & \textbf{2.34} & \textbf{2.33} & 2.28 & 2.30 & \textbf{0.50} & \textbf{0.50} & \textbf{0.50} \\ \cline{2-9}
		& jm-hidden & 2.37 & 2.34 & \textbf{2.23} & 2.19 & \textbf{0.50} & \textbf{0.50} & \textbf{0.50} \\ \cline{2-9}
		& rns-3-3 & 2.39 & 2.36 & \textbf{2.23} & \textbf{2.18} & 0.49 & \textbf{0.50} & \textbf{0.50} \\ \hline

		\multirow{5}{*}{\textbf{count-and-copy}}
		& lstm & 1.93 & 1.89 & \textbf{1.79} & \textbf{1.74} & \textbf{0.64} & \textbf{0.65} & \textbf{0.66} \\ \cline{2-9}
		& jm-10 & \textbf{1.89} & 1.87 & 1.82 & 1.81 & \textbf{0.64} & \textbf{0.65} & \textbf{0.66} \\ \cline{2-9}
		& jm-3.3.3 & \textbf{1.89} & \textbf{1.86} & 1.81 & 1.80 & \textbf{0.64} & \textbf{0.65} & \textbf{0.66} \\ \cline{2-9}
		& jm-hidden & 1.90 & 1.87 & 1.80 & 1.76 & \textbf{0.64} & \textbf{0.65} & \textbf{0.66} \\ \cline{2-9}
		& rns-3-3 & 1.93 & 1.90 & \textbf{1.79} & \textbf{1.74} & \textbf{0.64} & \textbf{0.65} & \textbf{0.66} \\ \hline

		\multirow{5}{*}{\textbf{marked-copy}}
		& lstm & 2.29 & 2.27 & \textbf{2.16} & \textbf{2.12} & \textbf{0.49} & \textbf{0.49} & \textbf{0.50} \\ \cline{2-9}
		& jm-10 & \textbf{2.26} & 2.25 & 2.19 & 2.18 & \textbf{0.49} & \textbf{0.49} & \textbf{0.50} \\ \cline{2-9}
		& jm-3.3.3 & \textbf{2.26} & \textbf{2.24} & 2.23 & 2.29 & \textbf{0.49} & \textbf{0.49} & \textbf{0.50} \\ \cline{2-9}
		& jm-hidden & 2.28 & 2.26 & 2.17 & \textbf{2.12} & \textbf{0.49} & \textbf{0.49} & \textbf{0.50} \\ \cline{2-9}
		& rns-3-3 & 2.29 & 2.26 & \textbf{2.16} & \textbf{2.12} & \textbf{0.49} & \textbf{0.49} & \textbf{0.50} \\ \hline

		\multirow{5}{*}{\textbf{unmarked-copy-diff-alpha}}
		& lstm & 2.28 & 2.24 & \textbf{2.16} & \textbf{2.12} & 0.49 & 0.49 & \textbf{0.50} \\ \cline{2-9}
		& jm-10 & 2.24 & \textbf{2.22} & 2.19 & 2.20 & 0.49 & 0.49 & \textbf{0.50} \\ \cline{2-9}
		& jm-3.3.3 & \textbf{2.23} & \textbf{2.22} & 2.18 & 2.18 & 0.49 & \textbf{0.50} & \textbf{0.50} \\ \cline{2-9}
		& jm-hidden & 2.26 & 2.23 & \textbf{2.16} & \textbf{2.12} & \textbf{0.50} & \textbf{0.50} & \textbf{0.50} \\ \cline{2-9}
		& rns-3-3 & 2.28 & 2.24 & \textbf{2.16} & \textbf{2.12} & 0.49 & 0.49 & \textbf{0.50} \\ \hline

		\multirow{5}{*}{\textbf{unmarked-reverse-and-copy}}
		& lstm & 2.15 & 2.13 & \textbf{2.08} & \textbf{2.06} & \textbf{0.51} & 0.50 & \textbf{0.50} \\ \cline{2-9}
		& jm-10 & \textbf{2.11} & \textbf{2.11} & 2.12 & 2.17 & \textbf{0.51} & 0.50 & \textbf{0.50} \\ \cline{2-9}
		& jm-3.3.3 & \textbf{2.11} & \textbf{2.11} & 2.10 & 2.11 & \textbf{0.51} & \textbf{0.51} & \textbf{0.50} \\ \cline{2-9}
		& jm-hidden & 2.13 & 2.12 & \textbf{2.08} & \textbf{2.06} & \textbf{0.51} & \textbf{0.51} & \textbf{0.50} \\ \cline{2-9}
		& rns-3-3 & 2.15 & 2.13 & \textbf{2.08} & \textbf{2.06} & \textbf{0.51} & 0.50 & \textbf{0.50} \\ \hline

		\multirow{5}{*}{\textbf{unmarked-copy}}
		& lstm & 2.15 & 2.13 & \textbf{2.08} & \textbf{2.06} & 0.49 & \textbf{0.50} & \textbf{0.50} \\ \cline{2-9}
		& jm-10 & \textbf{2.12} & \textbf{2.11} & 2.13 & 2.22 & \textbf{0.50} & \textbf{0.50} & \textbf{0.50} \\ \cline{2-9}
		& jm-3.3.3 & \textbf{2.12} & \textbf{2.11} & 2.10 & 2.11 & \textbf{0.50} & \textbf{0.50} & \textbf{0.50} \\ \cline{2-9}
		& jm-hidden & 2.14 & 2.12 & \textbf{2.08} & \textbf{2.06} & \textbf{0.50} & \textbf{0.50} & \textbf{0.50} \\ \cline{2-9}
		& rns-3-3 & 2.15 & 2.13 & \textbf{2.08} & \textbf{2.06} & \textbf{0.50} & \textbf{0.50} & \textbf{0.50} \\ \hline
        
    \end{tabular}
    \caption{Performance of different models on context free languages when only classifier is trained; whereas parameter belonging to controller and memory are fixed (mode = cm). We test all models on 3 independent test set, slightly long test set bin 0 (N = $[40-100]$), mid range test (N=$[100-200]$)set bin1 and bin2 (N = $[200-400]$}
    \label{tab:tasks}
\end{table*}

\section{Appendix C: Evaluating Stability in Practice}

In this section we provide various ways the stability in stack-augmented Recurrent Neural Networks (RNNs) can be practically assessed by focusing on the model’s error behavior, generalization to longer sequences, and resilience to perturbations. The following conditions are essential for determining whether the system remains stable across varying input scenarios.

\subsection{Error Bounds Across Sequence Lengths}

A stable system should maintain a bounded error for sequences of varying lengths. Let the loss function be \( \text{Loss}(f_\theta(x), y) \) for input \( x \) and target \( y \). Ideally, the error should satisfy:

\[
\left| \text{Loss}(f_\theta(x), y) \right| \leq C \quad \forall x \in \Sigma^* \text{ where } T_{\min} \leq |x| \leq T_{\max},
\]

where \( T_{\min} \) and \( T_{\max} \) define the minimum and maximum sequence lengths tested. The goal is to ensure that the error remains within a predefined bound \( C > 0 \) across sequences of different lengths.

\subsection{Consistency of Error Across Sequence Lengths}

The error should be consistent across sequences of varying lengths. Variance in error as a function of sequence length should be minimal. Formally, the stability criterion is:

\[
\text{Var}\left( \text{Loss}(f_\theta(x), y) \right) \text{ should be minimized across varying } |x|.
\]

Low variance in error indicates that the model’s behavior is predictable and stable regardless of the input sequence length.

\subsection{Performance Degradation Over Long Sequences}

Stability is also determined by the model’s ability to handle sequences significantly longer than those seen during training. Let \( x_{\text{long}} \) and \( x_{\text{short}} \) be sequences of longer and shorter lengths, respectively. Ideally, the model should satisfy:

\[
\text{Loss}(f_\theta(x_{\text{long}}), y_{\text{long}}) \approx \text{Loss}(f_\theta(x_{\text{short}}), y_{\text{short}}).
\]

Testing on sequences 2-4 times longer than those seen during training helps reveal whether the system can generalize to more complex scenarios without significant error growth.

\subsection{Generalization to Unseen Sequence Patterns}

The system should generalize to novel input sequences that conform to the language \( L \) but are not explicitly encountered during training. The criterion is that the model’s error on unseen patterns should be similar to that on known patterns:

\[
\left| \text{Loss}(f_\theta(x_{\text{new}}), y_{\text{new}}) - \text{Loss}(f_\theta(x_{\text{known}}), y_{\text{known}}) \right| \leq \delta,
\]

where \( x_{\text{new}} \) and \( x_{\text{known}} \) represent novel and known sequences, respectively, and \( \delta > 0 \) is a small acceptable difference in error.

\subsection{Analyzing Error Growth}

The growth of error as sequence length increases is a critical measure of stability. For a stable system, error growth should be sub-linear or, at most, linear:

\[
\left| \text{Loss}(f_\theta(x), y) \right| \leq C_1 \cdot |x| + C_2,
\]

where \( C_1 \) and \( C_2 \) are constants. Super-linear growth (e.g., quadratic or exponential) indicates instability and should be avoided.

\subsection{Robustness to Perturbations}

A stable model should be resilient to small perturbations in the input. For a small perturbation \( \epsilon \) added to the input sequence \( x \), the error difference should be minimal:

\[
\left| \text{Loss}(f_\theta(x + \epsilon), y) - \text{Loss}(f_\theta(x), y) \right| \leq \delta,
\]

where \( \delta > 0 \) is an acceptable margin for error change. High sensitivity to perturbations suggests instability.

\end{document}